\documentclass[sigconf]{acmart}

\usepackage{url}            
\usepackage{booktabs}       
\usepackage{amsfonts}       
\usepackage{nicefrac}       
\usepackage{amsthm}
\usepackage{fontawesome}

\usepackage{mathtools}
\usepackage{cleveref}
\usepackage{bm}
\usepackage{tikz}
\usetikzlibrary{matrix,positioning}

\usepackage{wrapfig}
\usepackage{latexsym}

\usepackage{lscape}
\usepackage{algorithm}
\usepackage[noend]{algpseudocode}
\usepackage{subcaption}
\usepackage{enumitem}
\usepackage{soul}
\usepackage{multirow}
\usepackage{multicol}
\usepackage{soul}
\usepackage{dsfont}
\usepackage{adjustbox}
\usepackage{pifont} 
\usepackage{cancel}
\usepackage{tabularx}
\usepackage{longtable}
\usepackage{array}

\usepackage[table]{xcolor}
\definecolor{grad1}{RGB}{255, 255, 204}
\definecolor{grad2}{RGB}{255, 237, 160}
\definecolor{grad3}{RGB}{254, 217, 118}
\definecolor{grad4}{RGB}{254, 178, 76}
\definecolor{grad5}{RGB}{253, 141, 60}

\allowdisplaybreaks[3]

\usepackage{pifont}
\newcommand{\cmark}{\textcolor{green!70!black}{\checkmark}}
\newcommand{\xmark}{\textcolor{red}{\ding{55}}}

\theoremstyle{plain}
\newtheorem{theorem}{Theorem}[section]
\newtheorem{corollary}[theorem]{Corollary}

\newtheorem{remark}[theorem]{Remark}

\newcommand{\std}[1]{\scriptsize{$\pm$#1}}

\AtBeginDocument{%
  }

\copyrightyear{2026}
\acmYear{2026}
\setcopyright{cc}
\setcctype{by}
\acmConference[KDD '26]{Proceedings of the 32nd ACM SIGKDD Conference on Knowledge Discovery and Data Mining V.2}{August 09--13, 2026}{Jeju Island, Republic of Korea}
\acmBooktitle{Proceedings of the 32nd ACM SIGKDD Conference on Knowledge Discovery and Data Mining V.2 (KDD '26), August 09--13, 2026, Jeju Island, Republic of Korea}
\acmDOI{10.1145/3770855.3817898}
\acmISBN{979-8-4007-2259-2/2026/08}

\begin{document}

\title{The Confidence Trap: Calibration Attacks for Graph Neural Networks}

\author{Cuong Dang}
\affiliation{%
  \department{Department of Electrical and Computer Engineering}
  \institution{Virginia Polytechnic Institute and State University}
  \city{Blacksburg}
  \state{VA}
  \country{USA}}
\email{cuongdc@vt.edu}

\author{Jiahao Zhang}
\affiliation{%
  \institution{The Pennsylvania State University}
  \city{University Park}
  \state{PA}
  \country{USA}}
\email{jiahao.zhang@psu.edu}

\author{Hieu Ta Quang}
\affiliation{%
  \department{Center for AI Research}
  \institution{VinUniversity}
  \city{Hanoi}
  \country{Vietnam}}
\email{24hieu.tq@vinuni.edu.vn}

\author{Dung Le}
\affiliation{%
  \department{Center for AI Research}
  \institution{VinUniversity}
  \city{Hanoi}
  \country{Vietnam}}
\email{dung.ld@vinuni.edu.vn}

\author{Lu Cheng}
\affiliation{%
  \institution{University of Illinois at Chicago}
  \city{Chicago}
  \state{IL}
  \country{USA}}
\email{lucheng@uic.edu}

\author{Suhang Wang}
\affiliation{%
  \institution{The Pennsylvania State University}
  \city{University Park}
  \state{PA}
  \country{USA}}
\email{szw494@psu.edu}

\renewcommand{\shortauthors}{Cuong Dang et al.}

\begin{abstract}

While confidence calibration is essential for trustworthy decision-making in safety-critical applications, the robustness of calibrated GNNs to adversarial structural perturbations remains largely unexplored. However, studying calibration attacks on graphs presents unique technical challenges: (1) the discrete nature of graph structures complicates gradient-based optimization, (2) existing underconfidence objectives fail to drive predictions toward uniform distributions, and (3) GNNs are highly sensitive to edge perturbations, often causing unintended label changes that violate attack constraints. To address these challenges, we propose a \textbf{Unified Graph Calibration Attack (UGCA)} framework designed for \textbf{worst-case (white-box) analysis} of GNN calibration robustness. UGCA introduces a KL-divergence loss to encourage uniform predictive distributions, a reranking mechanism to reduce label flipping, a hybrid loss to recover labels when violations occur, and beam search to explore a broader adversarial search space. We further provide theoretical insights linking model generalization, dataset complexity, and calibration vulnerability, showing that models with higher accuracy or trained on datasets with more classes are more susceptible under this threat model. Extensive experiments demonstrate that UGCA substantially increases Expected Calibration Error while preserving classification accuracy. \href{https://github.com/CaptainCuong/Graph-Calibration-Attack.git}{Our code is publicly available at GitHub \faGithub}.

\end{abstract}

\begin{CCSXML}
<ccs2012>
   <concept>
       <concept_id>10010147.10010257</concept_id>
       <concept_desc>Computing methodologies~Machine learning</concept_desc>
       <concept_significance>500</concept_significance>
       </concept>
   <concept>
       <concept_id>10002978.10003022.10003026</concept_id>
       <concept_desc>Security and privacy~Web application security</concept_desc>
       <concept_significance>500</concept_significance>
       </concept>
 </ccs2012>
\end{CCSXML}

\ccsdesc[500]{Computing methodologies~Machine learning}
\ccsdesc[500]{Security and privacy~Web application security}
\keywords{Graph Neural Network, Adversarial Attack, Confidence Calibration}


\maketitle

\section{Introduction}


Graphs are fundamental data structures that naturally represent relationships in various real-world systems, such as social networks~\cite{sankar2021graph,zhang2024graph,zhang2025unlearning,zhang2026attack}, molecular structures~\cite{al2025graph,cuong2023score,wang2022molecular,xu2025dualequinet}, knowledge graphs~\cite{liu2025exposing,yang2026query,luo2026graphs}, scene graphs in machine vision~\cite{zhang2024smartcooper,jiang2025enhancing}, and recommendation engines~\cite{fan2019graph,zhang2024linear,liu2024score}. Graph neural networks (GNNs) provide a powerful way to learn from graph-structured data by leveraging message-passing that iteratively learns a node representation by aggregating its neighborhood information~\cite{kipf2016semi,velivckovic2017graph,zhang2018link}. Although GNNs have achieved significant success, they often suffer from miscalibration \cite{wang2021confident,hsu2022makes,wang2025enhance}, i.e., their confidence scores do not accurately reflect real-world probabilities. This misalignment can lead to critical issues in areas such as financial security, business operations, and customer trust. For example, a hospital might use a GNN to detect cancer from ultrasound scans \cite{chowa2023graph}. If the model underestimates the risk, a patient may feel \emph{falsely} assured and miss the opportunity for early treatment. In contrast, an overconfident false positive could result in unnecessary biopsies or chemotherapy. 

Thus, extensive efforts have been made for GNN calibration, which can be categorized into two categories: (i) post-processing and (ii) in-processing methods. In post-processing, \cite{wang2021confident} introduced a topology-aware post-hoc calibration model that refines confidence estimates while preserving accuracy. In contrast, in-processing methods modify the training process itself; for example, \cite{wang2022gcl} reconfigures the standard cross-entropy loss, assigning greater weight to high-confidence examples and improving overall reliability.




\begin{figure*}[!t]
    \centering
    \includegraphics[width=0.85\textwidth]{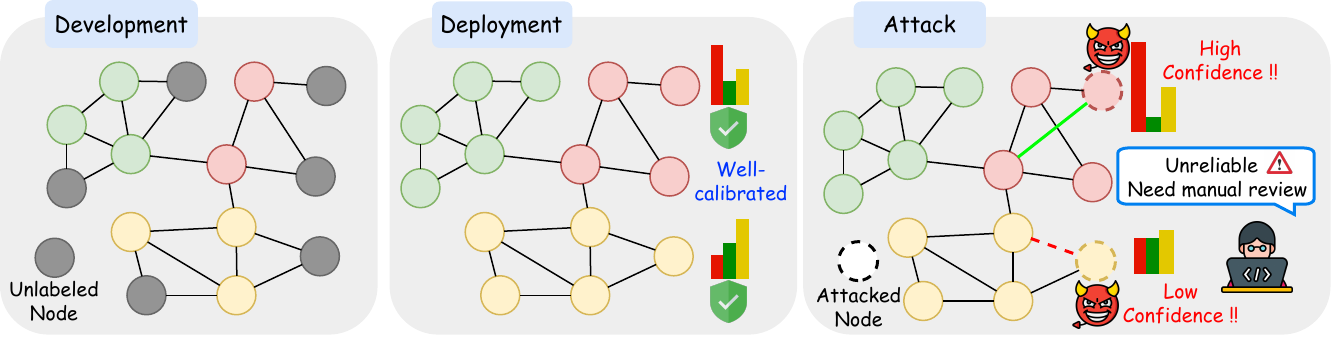}
    \vspace{-1em}
    \caption{\textbf{Illustration of calibration attack scenarios on GNN}. Attackers strategically manipulate graph structure to distort model confidence scores without altering predicted labels. As a result, confidence values become unreliable, either excessively inflated or diminished, necessitating manual intervention and undermining the model's trustworthiness.}
    \label{setting}
\end{figure*}

Despite the widespread use of GNN calibration, its security vulnerabilities remain largely unexplored. In this paper, we address the following research question
: \textit{How robust are calibration methods for GNNs against adversarial attacks?} Our study is conducted within the setting illustrated in Figure \ref{setting}. During the development phase, the GNN undergoes careful training, testing, and calibration to ensure reliable confidence scores. In deployment, predictions typically maintain reasonable confidence levels. However, under adversarial calibration attacks, confidence scores on manipulated nodes become either excessively high or low, while the predicted labels remain unchanged. 
\textit{If accuracy were visibly reduced, developers would quickly detect the problem and retrain or discard the model}. But when accuracy stays high while uncertainty calibration is corrupted, the system appears healthy on paper, so it continues to be deployed in practice. The consequence is subtle but damaging; predictions become less trustworthy, forcing organizations to fall back on costly human reviews instead of relying on the automated system. Thus, calibration attacks not only evade detection but also quietly undermine the very purpose of deploying GNN-based models in the first place. This research question is even more important for safety-critical GNN applications like fraud detection \cite{liu2020alleviating}, medical diagnosis \cite{paul2024systematic}, and autonomous systems \cite{chen2021graph}, where decisions rely heavily on confidence scores. For instance, in fraud detection in financial networks, adversaries can exploit calibration attacks to manipulate the confidence scores of a GNN without changing its classification output. A fraudster may connect to highly trusted nodes in the transaction graph, inheriting strong legitimate embeddings that lead to overconfidence in its classification as “legitimate.” This artificial legitimacy enables the fraudster to evade detection, as the system becomes blind to high-confidence yet incorrect classifications, increasing false negatives. Alternatively, a fraudster could introduce connections to fraudulent and legitimate users, creating ambiguity in classification. As a result, the fraud score for a fraudulent node may drop significantly, for example, from 99\% to 60\%, making it less actionable for automated detection. Although keeping the stealthiness, such manipulations increase manual review costs and delay responses to fraud incidents, weakening the effectiveness of fraud detection mechanisms.



Previous research introduced calibration attacks in the image domain by employing adversarial techniques with under-confidence and over-confidence objectives \cite{obadinma2024calibration,emde2023certified}. These attacks are designed to stop before altering the predicted label, ensuring that the original classification remains intact. However, extending calibration attacks to the graph domain is particularly challenging due to the discrete nature of graph structures, which prevents direct computation of derivatives with respect to the adjacency matrix. As our first contribution, we extend the calibration attack framework to GNNs by incorporating under-confidence and over-confidence objectives within a graph classification attack framework \cite{chen2018fast}. However, as analyzed in Section \ref{analyze}, directly applying prior calibration attack methods \cite{obadinma2024calibration} to GNNs presents several challenges. \textbf{First}, the under-confidence attack objective proves ineffective for graph data. Existing approaches attempt to reduce confidence by minimizing the gap between the highest and second-highest predicted probabilities. However, this method fails to push the probability distribution of a node toward the worst-case scenario, where the output probabilities resemble a uniform distribution. \textbf{Second}, GNNs exhibit high sensitivity to edge perturbations. Even a single modification in the graph structure can cause the second-highest probability to overtake the highest one, violating the attack constraints and rendering the attack unsuccessful. Additionally, edge perturbations may inadvertently alter the target node’s label, disrupting the optimization process. \textbf{Third}, the attack tends to get stuck in local optima. The greedy edge selection strategy, which prioritizes edges that significantly lower confidence, can sometimes lead to unintended label flipping. This premature stopping prevents the attack from reaching a globally optimal solution.

To address these challenges, we propose a Unified Graph Calibration Attack (UGCA) with several key improvements. To enhance the under-confidence objective, we introduce a new formulation based on the KL divergence between the output probability distribution and a uniform distribution. To mitigate the sensitivity of GNNs to edge perturbations, we design a conditional optimization loss function that reduces confidence only when the label remains unchanged and restores the original label when a change occurs. Additionally, we introduce a re-ranking mechanism that refines edge selection to ensure that chosen edges not only decrease confidence but also maintain the predicted label. Finally, to overcome the issue of local optima, we implement beam search, which expands the search space and improves the overall attack performance. These enhancements enable a more effective adaptation of calibration attacks to GNNs, addressing key limitations of the naive approach.

In summary, our \textbf{main contributions} are: (i) The first paper investigates calibration attacks for GNNs, theoretically and empirically analyzing the weaknesses of naive adaptation of the previously proposed calibration attack to GNNs (Section \ref{preliminary_chart}, \ref{preliminary_influence}); (ii) Theoretically presenting the connection between the complexity of the dataset and model generalization and calibration vulnerability (Section \ref{preliminary_theory}); (iii) Formulating graph calibration attacks as a constrained discrete optimization problem and proposing a novel searching algorithm to generate adversarial examples (Section \ref{UGCA}); and (iv) Conducting extensive experiments on five calibration methods across four real-world datasets, providing insights for building trustworthy GNNs (Section \ref{experiment}).

    
    

\section{Related Works}
{\bf Confidence Calibration for Graph Neural Networks.} 
Confidence calibration is a key topic for building trustworthy ML algorithms~\cite{mehrtash2020confidence,ma2023should,ma2024you,nguyen2026urag}. 
Existing calibration methods for GNNs primarily fall into post-processing and in-processing categories. Post-processing methods adjust a model’s confidence scores after training is complete. For example, In-n-Out \cite{nascimento2024n} addresses miscalibration in link prediction tasks by correcting polarity biases in confidence estimation. For node classification, CaGCN \cite{wang2021confident} enhances calibration by propagating confidence scores across connected nodes. Graph Attention Temperature Scaling (GATS) \cite{hsu2022makes} improves calibration through node-specific adjustments guided by graph structural insights. Data-centric methods, notably DCGC \cite{yang2024calibrating}, directly modify adjacency weights to prioritize informative edges, significantly boosting calibration stability. In-processing methods incorporate calibration objectives directly into model training. For example, GCL \cite{wang2022gcl} tackles underconfidence by adding a calibration-aware regularization term to the standard cross-entropy loss. However, these methods assume static, unperturbed graphs, leaving calibration robustness under adversarial manipulation largely unexplored.

\noindent\textbf{Adversarial Attacks for Graph Neural Networks.} 
Adversarial attacks on graph-structured data have become a growing research focus due to the widespread deployment of Graph Neural Networks (GNNs) in sensitive applications. Numerous works have demonstrated that GNNs, like other deep learning models, are vulnerable to subtle perturbations in graph structure or node features \cite{zugner2018adversarial,dai2018adversarial,zugneradversarial,lin2025you}. In addition to graph neural networks, adversaries may attack some other important algorithms for graphs, such as network embeddings including LINE \cite{tang2015line} and Deepwalk \cite{perozzi2014deepwalk}, graph-based semi-supervised learning (G-SSL) \cite{zhu2002learning}, and knowledge graph embedding \cite{lin2015learning}. However, no prior work has explored adversarial attacks in the context of graph calibration \cite{nascimento2024n,wang2022gcl,wang2021confident,hsu2022makes,yang2024calibrating}. This task presents a fundamentally different challenge, rather than altering model predictions, calibration aims to improve the confidence alignment of predictions while keeping labels unchanged. This distinction makes most existing attack strategies unsuitable, as they are designed to intentionally cause label flips. Even minimal perturbations from gradient-based methods often result in label changes, violating the constraints of calibration discussed in Section \ref{preliminary_influence}. To support adversarial studies in graph calibration, there is a need for new attack algorithms that can navigate this delicate balance, perturbing the graph in a way that affects prediction confidence without altering the predicted label. This requires more sophisticated search strategies that can identify and apply subtle, targeted changes to the graph structure, ensuring calibration metrics are influenced while classification outcomes remain stable.

\noindent\textbf{Attacking Uncertainty Estimates.} 
\cite{kopetzki2021evaluating} first showed that predictive uncertainty estimation, specifically Dirichlet-based Uncertainty models, fails to maintain reliable uncertainty estimates when inputs are perturbed. \cite{emde2023certified} directly addressed how adversarial attacks can manipulate not just the accuracy of a model but its uncertainty estimates, linking directly to the broader field of adversarial attacks on uncertainty quantification in machine learning. \cite{kumar2020certifying} introduced a method to certify not just the class label but also the confidence of the classifier's prediction. \cite{kopetzki2021evaluating} is specific to OOD detection, typically assumes a Dirichlet model structure, does not generalize across standard uncertainty estimation techniques, and often impacts prediction accuracy. Hence \cite{galil2021disrupting} presented a general algorithm that disrupts uncertainty estimation across all major techniques, maintaining exact model predictions, making it undetectable by standard robustness checks. Instead of crafting adversarial input in the testing phase, \cite{zeng2023manipulating} tried to attack in the training phase, finding that models show a significant drop in entropy, but in-domain accuracy is still preserved, making it hard to detect. Although many previous works tried to understand the vulnerability of the calibration methods and uncertainty estimation methods in the image domain, they left that of graphs underexplored. However, designing a calibration attack framework for GNN is not obvious; hence, in this paper, we propose a calibration attack specifically for GNNs.
\section{Background}

In this section, we introduce basic knowledge on adversarial graph node classification attacks and calibration attacks. 

\noindent\textbf{ Notations.} 
Given an attributed graph \(\mathcal{G} = (\mathcal{V}, \mathcal{E}, \mathbf{X})\), where \(\mathcal{V} = \{v_1, \dots, v_N\}\) represents the set of \(N\) nodes, \(\mathcal{E} \subseteq \mathcal{V} \times \mathcal{V}\) is the set of edges, and \(\mathbf{X} \in \mathbb{R}^{N \times d}\) is the node feature matrix, where \( d \) is the feature dimension. Each node \( u \) has an initial feature row vector \(\mathbf{x}_u \in \mathbb{R}^{1 \times d}\), and a subset of nodes \( \mathcal{V}_L \subset \mathcal{V} \) have known labels \( y_u \in \{1, \dots, C\} \). The graph’s structure is captured by the adjacency matrix \(\mathbf{A} \in \mathbb{R}^{N \times N}\), where \(\mathbf{A}_{ij} = 1\) if nodes \(v_i\) and \(v_j\) are connected, and \(\mathbf{A}_{ij} = 0\) otherwise. The goal of node classification is to learn a function \( f: (\mathbf{A}, \mathbf{X}) \to \mathbb{R}^{N \times C} \) that maps node features to class probabilities, and the predicted label is denoted by $\hat{y}_u = \mathop{\arg\max}_{k} f(\mathbf{A}, \mathbf{X})_{u,k}$. 

\subsection{Adversarial Node Classification Attack} 

An adversarial attack aims to perturb the graph \( \mathcal{G} \) by modifying a small number of edges, such that the trained model misclassifies the target node. Adversarial modifications can be represented as a perturbation matrix \( \mathbf{M} \): 
\[
\hat{\mathbf{A}} = \mathbf{A} + \mathbf{M},
\]
where \( \mathbf{M} \) is a sparse matrix with values in \( \{1, -1, 0\} \), respectively means an edge \( (v_i, v_j) \) is added, removed or no modification. A widely used gradient-based attack, Fast Gradient Attack (FGA) \cite{chen2018fast}, computes the gradient of the loss function with respect to the adjacency matrix and modifies edges based on the influence of edge to the attack objective. The change of model’s loss $\mathcal{L}$, denoted $\Delta\mathcal{L}$, when modifying an edge \( (v_i, v_j) \) is given by:
   \[
   \Delta \mathcal{L} \approx \frac{\partial \mathcal{L}}{\partial \mathbf{A}_{ij}}\cdot \Delta \mathbf{A}_{ij} = \frac{\partial \mathcal{L}}{\partial \mathbf{A}_{ij}}\cdot \mathbf{M}_{ij},
   \]
where $\Delta\mathbf{A}_{ij}$ is the change of the adjacency matrix when an edge \( (v_i, v_j) \) is flipped. The attacker selects the edge \( (v_{i^*}, v_{j^*}) \) that maximizes the approximated decrease in loss:
   \[
   (v_{i^*}, v_{j^*}) = \mathop{\arg\max}_{(v_{i}, v_{j}) \in \mathcal{E}} \Delta \mathcal{L}.
   \]
Once selected, the adjacency matrix is updated according to the perturbation matrix:
     \[
     \mathbf{A}_{ij} \leftarrow \mathbf{A}_{ij} + \mathbf{M}_{ij}.
     \]
The process is repeated for a fixed number of modifications \( K \).

\subsection{Calibration Attack} 

Calibration attacks are types of adversarial attack that manipulates a machine learning model's confidence scores without changing its predicted labels, leading to severe miscalibration. These attacks threaten the reliability of AI systems, especially in safety-critical applications where confidence scores influence decision-making. \citeauthor{obadinma2024calibration}~\cite{obadinma2024calibration} proposed two types of calibration attacks, \textit{underconfidence} and \textit{overconfidence attack}, for the image domain. In this work, we adapt these concepts to graph-based tasks, especially node classification with GNNs. Let $f(\mathbf{A}, \mathbf{X})_{u,k}$ denote the predicted probability of class $k$ belonging to node $u$, given adjacency matrix $A$ and feature matrix $X$.

\noindent\textbf{Underconfidence Attack (UCA).} This reduces the underconfidence loss, $\mathcal{L}_{\textrm{UCA}}$, on the target node $u$, which is the confidence gap between the top-1 class $k$ and second-best class $j$, making the model overly cautious. Essentially, it optimizes the following problem:
\begin{equation}
    \min_{\mathbf{A}}\mathcal{L}_{\textrm{UCA}}(u, k) = \min_{\mathbf{A}}\{f(\mathbf{A}, \mathbf{X})_{u,k} - \max_{j \neq k} f(\mathbf{A}, \mathbf{X})_{u,j}\}.
    \label{UCA_Loss}
\end{equation}

\noindent\textbf{Overconfidence Attack (OCA).} This reduces the overconfidence loss, $\mathcal{L}_{\textrm{OCA}}$, increasing the confidence score of the predicted class $k$ of node $u$, making the model overly certain. It is formulated as 
\begin{equation}
   \min_{\mathbf{A}}\mathcal{L}_{\textrm{OCA}}(u, k) = \min_{\mathbf{A}}\{1 - f(\mathbf{A}, \mathbf{X})_{u,k}\}.
   \label{OCA_Loss}
\end{equation}

Besides, the attack also need to keep the label unchanged.
\begin{equation}
    \mathop{\arg\max}_{m} f(\mathbf{A}, \mathbf{X})_{u,m} = \mathop{\arg\max}_{n} f(\hat{\mathbf{A}}, \mathbf{X})_{u,n}.
    \label{constraint}
\end{equation}

\section{Preliminary Results \& Theoretical Analyses}
\label{analyze}

In this section, we first present a preliminary study of the technical challenges of graph calibration attacks and then provide a theoretical analysis to motivate our proposed method. 

\begin{figure*}[!t]
    \centering
    \includegraphics[width=0.85\textwidth]{Image/Preliminary_ECE.pdf}
    \vskip -1em
    \caption{\textbf{Performance of naive UCA and OCA applied to a calibrated GNN using the Fast Gradient Attack (FGA)} \cite{chen2018fast}. (a) Baseline ECE of the calibrated GNN. (b) ECE, after underconfidence attacks, shows limited effectiveness, particularly for low-confidence predictions. (c) ECE after OCA, demonstrating effectiveness due to the assignment of full probability mass to a single class. (d) Distribution of the number of perturbations applied across attack attempts, highlighting a high rate of early termination due to label flipping constraints. (e) Confidence distribution of failed attacks, illustrating the difficulty of reducing confidence for low-probability predictions under the original underconfidence objective.
    }
    \label{preliminary}
\end{figure*}

\subsection{Effectiveness of Under/Overconfidence Attack for GNNs}
\label{preliminary_chart}

To adapt the calibration attack for GNNs, we integrate both overconfidence and underconfidence objectives on top of FGA \cite{chen2018fast} (see Algorithm~\ref{alg:naive} in Appendix~\ref{algo_supp} for details). Figure \ref{preliminary}(c) shows that the overconfidence attack is effective because the attacked confidence is nearly 1, making confidence of the remaining class approximately 0. In contrast, Figure \ref{preliminary}(b) shows that underconfidence attacks struggle with inefficacy, particularly for low-probability predictions. This limitation arises due to the incomplete formulation of the underconfidence objective, inherent weaknesses in the previously proposed underconfidence loss function, and the sensitivity of node probabilities to edge perturbations.

\noindent\textbf{Inherent Limitation of Underconfidence Attack Objective.}
\cite{obadinma2024calibration} suggests that a model’s prediction confidence is lower when the difference between the highest and second-highest probabilities is minimal. However, this interpretation is incomplete; it should be that the confidence of all classes is $\frac{1}{C}$ when the output distribution approximates uniformity.

\noindent\textbf{Weakness of Underconfidence Objective.} \cite{obadinma2024calibration} proposed the underconfidence objective as shown in Equation \ref{UCA_Loss}, making the highest probability decrease while the second highest one increase. Additionally, the rate of change of these probabilities are constant no matter how large they are. This accidentally makes the previous second-highest probability become the highest probability after edge modification, especially when the difference between the highest and second-highest probability is insignificant, prematurely terminating the attack due to a label change.

\noindent\textbf{Label Sensitivity to Edge Perturbation.} As illustrated in Figure \ref{preliminary}(d), a significant proportion of attacks fail with no perturbation, mainly for predictions with confidence around 0.5521 shown in Figure \ref{preliminary}(e). Additionally, even in successful attacks, the modified confidence scores remain distant from the worst-case scenario, where they should ideally converge to $\frac{1}{C}$. This is due to constraints requiring label preservation in Equation \ref{constraint}.

\subsection{Approximate Label Changes by Edge Perturbations}
\label{preliminary_influence}
\label{math_derive}
Whether an edge change makes the label flip is important in selecting an adversarial edge because it helps us to avoid selecting edges likely violating the constraint shown in Equation \ref{constraint}. Next, we mathematically derive the condition on label flipping. Let ($v_{i},v_{j}$) be the perturbed edge, $p_{u,c}$ and $p'_{u,c}$ be the predicted probability of class $c$ of node $u$ before and after the perturbation. The difference between $p'_{u,c}$ and $p_{u,c}$ can be approximated based on the first order approximation as
\[ 
p'_{u,c} - p_{u,c} = \Delta p_{u,c} \approx  \frac{\partial  p_{u,c}}{\partial \mathbf{A}_{ij}} \cdot \Delta \mathbf{A}_{ij}
,\]

Let $m=\mathop{\arg\max}_{k}p_{u,k}, n=\mathop{\arg\max}_{k \neq m}p_{u,k}$, and denote $\hat{p}'_{u,c}$ as the perturbed probability of class $c$ of node $u$. The condition for label flipping is as follows:
\[
\hat{p}'_{u,m} > \hat{p}'_{u,n} \Leftrightarrow \hat{p}_{u,m} + \Delta \hat{p}_{u,m} - \hat{p}_{u,n} - \Delta \hat{p}_{u,n} > 0
\]

\[
\Leftrightarrow \hat{p}_{u,m} - \hat{p}_{u,n} + \frac{\partial  \hat{p}_{u,m}}{\partial \mathbf{A}_{ij}} \cdot \Delta \mathbf{A}_{ij} - \frac{\partial  \hat{p}_{u,n}}{\partial \mathbf{A}_{ij}} \cdot \Delta \mathbf{A}_{ij} > 0 
\]

\[
\Leftrightarrow 
\colorbox{green!20}{$\hat{p}_{u,m} - \hat{p}_{u,n} + \frac{\partial  (\hat{p}_{u,m}-\hat{p}_{u,n})}{\partial A_{ij}} \cdot \Delta \mathbf{A}_{ij} > 0$}.
\]

\subsection{Calibration Vulnerability Through the Lens of Generalization}
\label{preliminary_theory}

\begin{theorem}[Connection between accuracy and attack ECE]
\label{generalization_theorem} Let $\mathcal{D}$ be a test set with $K$ classes. Let $\textrm{acc}(\mathcal{D}) \in [\frac{1}{K},1]$ denote the classification accuracy of a Graph Neural Network model $f$ on $\mathcal{D}$. Ideally, OCA assigns probability 1 to the predicted class and 0 to all others. UCA minimizes the top confidence to $\frac{1}{K}$ without changing the predicted label. Then, the ECE under each attack satisfies:
\[
    \mathrm{ECE}_{\textrm{OCA}} = \frac{1}{K}[(K-2)\textrm{acc}(\mathcal{D})+1], \quad \mathrm{ECE}_{\textrm{UCA}} = \textrm{acc}(\mathcal{D}) - \frac{1}{K}.
\]
\end{theorem}
\begin{proof}
    The proof can be found in Appendix \ref{sec:proof_gen}.
\end{proof} 

In Theorem \ref{generalization_theorem}, we assume that $acc\left({\mathcal{D}}\right)$ lies between $ [\frac{1}{K},1]$. This is a realistic assumption as a random guess gives an accuracy of $\frac{1}{K}$ and a well trained GNN generally performs much better than random guess.

\begin{remark}
     The ECE under both OCA and UCA increases monotonically with classification accuracy $\textrm{acc}(\mathcal{D})$, indicating that better generalization correlates with greater vulnerability to calibration attacks under this model.
\end{remark}

\begin{corollary}[Impact of the number of classes]
\label{generalization_corollary}
Consider two models, $f_1$ and $f_2$, trained on datasets with $K_1$ and $K_2$ classes respectively, where $K_1 > K_2$. Suppose both models achieve identical classification accuracy. Then, under the proposed ideal calibration attacks, the ECE for model $f_1$ exceeds that of model $f_2$; that is, $\mathrm{ECE}_{\textrm{OCA}_1} > \mathrm{ECE}_{\textrm{OCA}_2}$ and $\mathrm{ECE}_{\textrm{UCA}_1} > \mathrm{ECE}_{\textrm{UCA}_2}$.
\end{corollary}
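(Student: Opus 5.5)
The plan is to substitute the closed forms from Theorem~\ref{generalization_theorem} and regard each ECE as a function of the number of classes $K$, with the accuracy held fixed at the common value $a := \textrm{acc}(\mathcal{D})$ shared by $f_1$ and $f_2$. The corollary then says that each function increases strictly in $K$, at least over the range where $a$ is admissible for both models. Since the theorem requires $a \in [\frac{1}{K},1]$ and $K_1 > K_2$, the binding condition is $a \ge \frac{1}{K_2}$; this implies $a > \frac{1}{K_1}$ automatically, so the theorem applies to both models.

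\textbf{UCA.} Here $\mathrm{ECE}_{\textrm{UCA}}(K) = a - \frac{1}{K}$. This is immediate:
\[
\mathrm{ECE}_{\textrm{UCA}_1} - \mathrm{ECE}_{\textrm{UCA}_2} = \frac{1}{K_2} - \frac{1}{K_1} > 0,
\]
because $K_1 > K_2 \ge 1$.

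\textbf{OCA.} Rewrite
\[
\mathrm{ECE}_{\textrm{OCA}}(K) = \frac{(K-2)a + 1}{K} = a + \frac{1 - 2a}{K}.
\]
Then
\[
\mathrm{ECE}_{\textrm{OCA}_1} - \mathrm{ECE}_{\textrm{OCA}_2} = (1 - 2a)\Bigl(\frac{1}{K_1} - \frac{1}{K_2}\Bigr) = (2a - 1)\Bigl(\frac{1}{K_2} - \frac{1}{K_1}\Bigr).
\]
The second factor is strictly positive, so the difference is positive exactly when $a > \frac{1}{2}$.

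This sign condition is the main obstacle. The OCA inequality is not unconditional: for $a < \frac{1}{2}$ (possible when $K_2 \ge 3$) it reverses, and at $a = \frac{1}{2}$ the two ECEs are equal. I would handle it by stating the OCA half under the explicit assumption $a > \frac{1}{2}$. This is consistent with the paper's remark that a well-trained GNN does much better than random guessing, and it covers the realistic regime of the experiments. I would also note the degenerate case $K_2 = 2$: there $a \ge \frac{1}{2}$ is forced, and strictness still needs $a > \frac{1}{2}$.

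With that caveat recorded, both inequalities follow from the monotonicity of $1/K$.
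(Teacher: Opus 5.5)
Your proposal is correct and follows the paper's argument. The paper likewise treats $\frac{1}{x}\left[(x-2)\alpha+1\right]$ as a function of the number of classes, shows it is increasing via the derivative $\frac{2\alpha-1}{x^2}>0$ under the assumption $\alpha>\frac{1}{2}$ (which it introduces silently in the proof rather than in the corollary), and calls the UCA case $\textrm{acc}(\mathcal{D})-\frac{1}{K}$ straightforward. Your exact difference $(2a-1)\bigl(\frac{1}{K_2}-\frac{1}{K_1}\bigr)$ is the discrete counterpart of that derivative, and you are right to state the $a>\frac{1}{2}$ hypothesis explicitly.
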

\begin{proof}
    The proof can be found in Appendix \ref{sec:proof_cor}.
\end{proof}

\begin{remark}
    From Theorem~\ref{generalization_theorem} and Corollary~\ref{generalization_corollary}, we can see that as $K \to \infty$, $\mathrm{ECE}_{\text{OCA}}, \mathrm{ECE}_{\text{UCA}}$ strictly increase to $\text{acc}(\mathcal{D})$. This has critical implications. First, high-class tasks (very large $K$) expose an inherent calibration vulnerability that ECE almost asymptotically matches the model accuracy. Besides, as task complexity increases, i.e., $K$ grows, calibration robustness sharply deteriorates, even for accurate models, posing a challenge for applying GNNs to complex classification tasks.
\end{remark}
\section{Unified Graph Calibration Attack Framework}
\label{UGCA}

To address the identified weaknesses in existing approaches (as discussed in Section \ref{analyze}), we formulate the calibration attack on GNNs as a \textbf{constrained discrete optimization problem}. We propose a unified graph calibration attack (UGCA), a novel search algorithm that iteratively optimizes the attack while respecting constraints on label preservation. 

As shown in Figure \ref{framework}, the attack begins with the original graph structure. The algorithm selects an appropriate loss function based on the predicted label of the target node. If the predicted label of the perturbed graph aligns with that of the original one, the attack applies a calibration loss function to manipulate the confidence scores. This includes the overconfidence objective shown in Equation \ref{OCA_Loss} or the underconfidence objective, which is the Kullback-Leibler (KL) divergence between the predicted probabilities and a uniform distribution as follows:

\[
\mathcal{L}_{\mathrm{KL-uniform}} = \sum_{k=1}^C \hat{p}_k \log \frac{\hat{p}_k}{U_k}, \quad U_k = \frac{1}{C}.
\]

\begin{figure}[!t]
    \centering
    \includegraphics[width=0.5\textwidth]{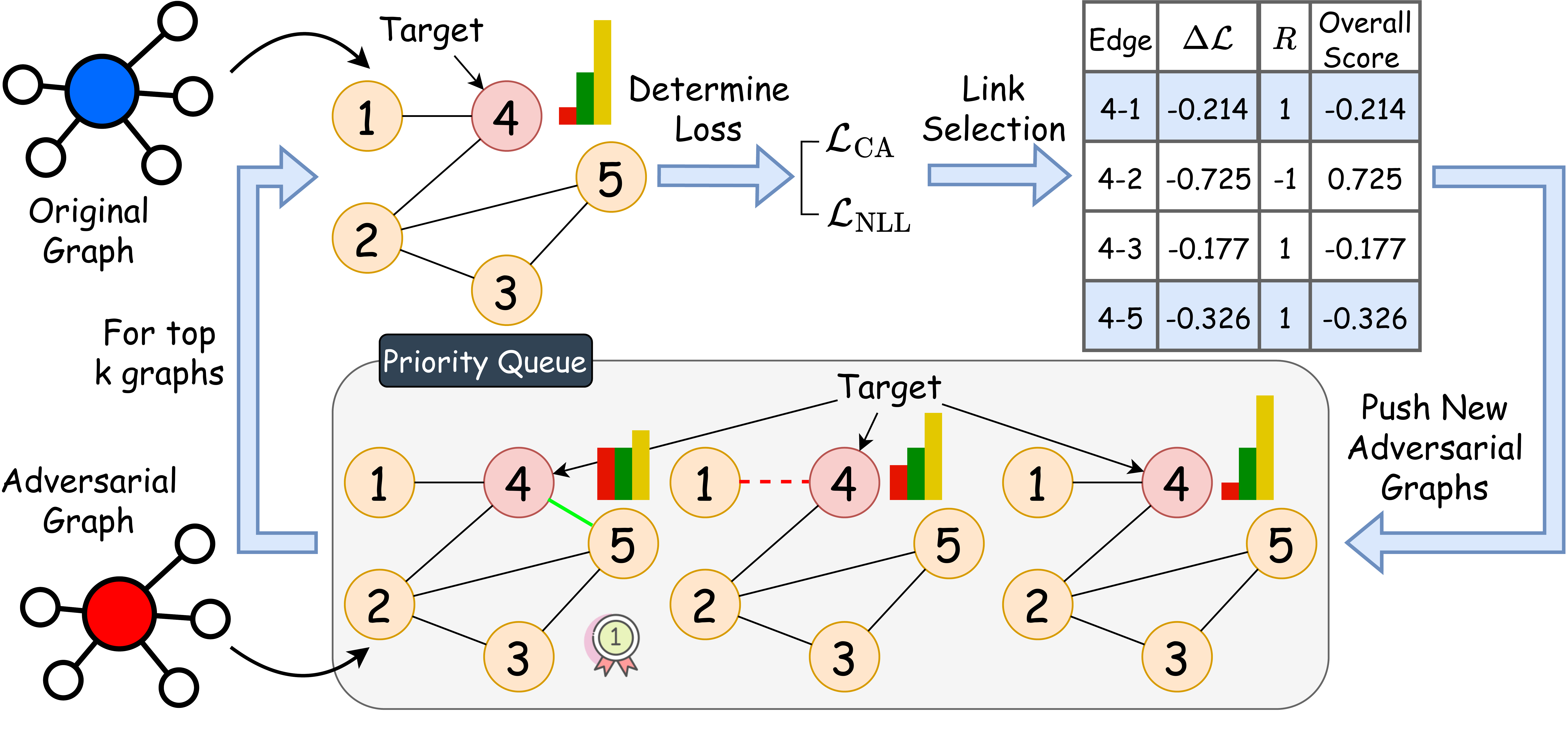}
    \vskip -1em
    \caption{\textbf{Overview of the proposed calibration attack framework}, starting with a benign graph and terminating with an adversarial graph. The process begins by selecting a loss function based on whether the label is preserved, followed by reranking edge perturbations to avoid label flipping. Beam search is then used to explore multiple adversarial graph candidates. This strategy manipulates model confidence while preserving predicted labels, with the most effective adversarial graph selected from the top of the priority queue.}
    \label{framework}
\end{figure}

Compared to the previous underconfidence attack objective proposed in \cite{obadinma2024calibration}, KL-Divergence is more flexible when dealing with low confidence.

\[
\frac{\partial \mathcal{L}_\mathrm{KL-uniform}}{\partial \mathbf{A}} = \sum_{k = 1}^{C} (\log C\hat{p}_{k} + 1)\frac{\partial \hat{p}_{k}}{\partial \mathbf{A}}.
\]

The KL-Divergence derivative shows that during the optimization of the attack objective, a class with low confidence ($\hat{p}_{k}$) yields a small coefficient, $\log C\hat{p}_{k} + 1$. As a result, the gradient $\frac{\partial \hat{p}_{k}}{\partial \mathbf{A}}$ has a reduced effect on minimizing the overall loss, causing only a slight change in the confidence $\partial \hat{p}_{k}$, making label flipping more difficult.

In contrast, when the predicted label is changed, a negative log-likelihood is used for the current probability vector and a one-hot vector with the original predicted label's entry having the value 1.

\[
\mathcal{L}_{\mathrm{NLL}} = -\log p_{u,j}, \quad j = \mathop{\arg\max}_{k} f(\mathbf{A}, \mathbf{X})_{u,k}.
\]

Overall, the loss function can be written in the following format.

\[
\mathcal{L}_{\mathrm{total}} = \beta \cdot \mathcal{L} + (1 - \beta) \cdot \mathcal{L}_{\mathrm{NLL}},
\]

where $\beta = \mathbb{I} (\arg\max f(\mathbf{A}, \mathbf{X}) = y_{u})$, $\mathbb{I}$ is the indicator function, and $y_{u}$ is the groundtruth label of node $u$. This is for flipping back the label when it violates constraint Eq.~\eqref{constraint}. In general, we can consider this as an adaptive attack mechanism, switching between two modes, attacking and validating.

After choosing the loss function, we will take the derivative of the loss with respect to the adjacency matrix to see the influence of edges on the objective. In case of attacking, we need to be aware of whether adding or removing an edge probably violates the constraint. This can be seen as constrained optimization, which optimizes the attack objective but avoids violating the label flipping constraint. To this end, we introduce a \textbf{reranking term} inspired by the derivation in Section \ref{math_derive} to rerank the influence of edges on the objective. The reranking term is defined as:
\[
R = \hat{p}_{u,m} - \hat{p}_{u,n} + \frac{\partial  (\hat{p}_{u,m}-\hat{p}_{u,n})}{\partial \mathbf A_{ij}} \cdot \mathbf A_{ij},
\]
where \( \mathbf A_{ij} \) represents the proposed modification to the adjacency matrix. This term \textbf{downweights} edges that are likely to flip the label while \textbf{amplifying} those that enhance the attack objective without violating constraints.

After calculating the derivative of the attack loss on the adjacency matrix, we multiply it with the reranking term to keep the value of edges not likely to violate the constraint; otherwise, it $R$ helps downweight the opposites. 
\[
S=\Delta\mathcal{L}\cdot R.
\]

For example, as can be seen in Figure \ref{framework}, edge ($v_{4},v_{2}$) is the most influential edge to the attack objective. However, it is likely to change the label; hence, the influence is reversed by multiplying with $R$, leaving edges ($v_{4},v_{1}$) and ($v_{4},v_{5}$) as potential candidates. Then we choose the top $k$, which is the beam size, most influential edges to create $k$ adversarial graphs and push them into a priority queue. We continue this process until it exceeds a budget threshold. 

The details of our proposed unified attack framework are presented in Algorithm~\ref{alg:compare}. We mark the key difference compared with the naive approach in green boxes. For a detailed comparison with the naive algorithm, please refer to Appendix~\ref{algo_supp}.

\begin{algorithm}
\caption{Unified Adversarial Attack Framework for Graph Calibration}
\begin{algorithmic}[1]
\Require Victim Model $f$, Attack Budget $\Delta$, Input Adjacency Matrix $\textbf{A}_{0}$, Attacked Node Index $u$, Groundtruth Label $y_{0}$ of node $u$, Beam Width $B$, Priority Queue $Q$ sorted based on confidence score, Attack Strategy $\mathcal{T}$, Permutation Function $\sigma: \{1, 2, \dots, n\} \to \{1, 2, \dots, n\}$.

\Ensure Adversarial Adjacency Matrix $\textbf{A}_{\text{adv}}$
\State $\hat{p}_0 \gets$ \Call{AttackedConfidence}{$\mathbf{A}_{0},\hat{y}_{0}$}

\State \colorbox{green}{$Q \gets \{\left(\hat{p}_0,\textbf{A}_{0}\right)\}$}

\For{$t = 1$ \textbf{to} $\Delta$}
    \For{\colorbox{green}{$\_,\mathbf{A}$ \textbf{in} $Q.\texttt{top}(B)$}}
        \State $\mathcal{L} \gets$ \Call{DetermineLoss}{$\mathbf{A}$, $y_0$}
        \State $\hat{p} \gets \text{Softmax}\left(f\left(\textbf{A},\mathbf{X}\right)_{u}\right)_{y}$
        \State $V \gets$ \Call{TopEdge}{$\mathcal{L}$, $\mathbf{A}$, $B$, $\hat{p}$} \Comment{Select $B$ potential edges}
        \For{\colorbox{green}{$v$ \textbf{in} $V$}}
            \State $\mathbf{A'}\gets \mathbf{A}$
            \State $\mathbf{A'}_{uv}\gets\mathbf{A'}_{uv}+\mathbf{M}_{uv}$ \Comment{Update adversarial graph}
            \State \colorbox{green}{$Q\gets Q \cup \{($\Call{AttackedConfidence}{$\mathbf{A'},\hat{y}_{0}$}$,\mathbf{A'})\}$} \Comment{Add an adversarial graph}
        \EndFor
    \EndFor
\EndFor
    
\State \colorbox{green}{$\_$, $\mathbf{A}_{\text{adv}}\gets Q.\texttt{top}(1)$}

\State \textbf{return} $\mathbf{A}_{\text{adv}}$

\State

\Procedure{AttackedConfidence}{$\mathbf{A}$, $y$}
\State \textbf{return} $\text{Softmax}\left(f\left(\textbf{A},\mathbf{X}\right)_{u}\right)_{y}$ \Comment{Probability of class $y$ of node $u$}
\EndProcedure

\State

\Procedure{DetermineLoss}{$\mathbf{A}$}
\State $\hat{y} \gets \mathop{\arg\max}_{k} f\left(\textbf{A},\mathbf{X}\right)_{u,k}$
\State $\hat{p} \gets \text{Softmax}\left(f\left(\mathbf{A},\mathbf{X}\right)_{u}\right)$
\If{\colorbox{green}{$\hat{y}\neq y_0$}}
    \State \colorbox{green}{\textbf{return} $-\text{log}$ $\hat{p}_{y_0}$}
\ElsIf{$\mathcal{T}$ \textbf{is} \texttt{OCA}}
    \State \textbf{return} $1-\hat{p}_{\hat{y}}$
\ElsIf{$\mathcal{T}$ \textbf{is} \texttt{UCA}}
    \State \colorbox{green}{$U\gets \text{Uniform}(C)$}
    \State \colorbox{green}{\textbf{return} $\sum_{k=1}^C \hat{p}_k \log \frac{\hat{p}_k}{U_k}$}
\EndIf
\EndProcedure

\State

\Procedure{TopEdge}{$\mathcal{L}$, $\mathbf{A}$, $B$, $p$}
\State $\Delta\mathcal{L} \gets \frac{\partial \mathcal{L}}{\partial \mathbf{A}}$
\State \colorbox{green}{$R \gets p_{u,m} - p_{u,n} + \frac{\partial  \left(p_{u,m}-p_{u,n}\right)}{\partial A}$}
\State \colorbox{green}{$S \gets \Delta \mathcal{L}\cdot R$}
\State $S\gets S_{u}$ \Comment{Just select edges connecting to node $u$}
\State Sort $S$ such that $S_{\sigma(0)}\geq S_{\sigma(1)}\geq\dots\geq S_{\sigma(|S|)}$
\State \textbf{return} \colorbox{green}{$\{\sigma(0),\sigma(1),\cdots,\sigma(B)\}$}
\EndProcedure

\end{algorithmic}
\label{alg:compare}
\end{algorithm}
\section{Experiments}

\label{experiment}

In this section, we present our experimental settings and introduce our empirical results.  

\renewcommand{\std}[1]{\tiny{$\pm$#1}}
\renewcommand{\arraystretch}{0.6}
\newcommand{\gradcell}[2]{\ifnum#1=0\cellcolor{grad1}#2%
\else\ifnum#1=1\cellcolor{grad2}#2%
\else\ifnum#1=2\cellcolor{grad3}#2%
\else\ifnum#1=3\cellcolor{grad4}#2%
\else\ifnum#1=4\cellcolor{grad5}#2%
\else#2\fi\fi\fi\fi\fi}

\subsection{Experimental Settings}

\noindent\textbf{Datasets and Baselines.} We mainly conduct experiments on 4 datasets, including Pubmed \cite{sen2008collective}, Citeseer \cite{sen2008collective}, Cora \cite{sen2008collective}, and CoraML \cite{sen2008collective}, and also include results on Photo~\cite{shchur2018pitfalls}, Physics~\cite{shchur2018pitfalls}, OGBN-Arxiv~\cite{hu2020open}, and Reddit~\cite{hamilton2017inductive}. We focus on a wide range of calibration methods, including TS \cite{guo2017calibration}, VS \cite{guo2017calibration}, MS \cite{guo2017calibration}, SimCalib~\cite{tang2024simcalib}, CaGCN~\cite{wang2021confident}, DCGC~\cite{yang2024calibrating}, ETS~\cite{hsu2022makes}, GATS~\cite{hsu2022makes}, and GETS~\cite{zhuanggets}. 

We follow previous works on adversarial attack \cite{chen2018fast,zugner2018adversarial,dai2018adversarial} and choose GCN \cite{kipf2016semi}, one of the most representative GNN architectures, as the victim model. We use Expected Calibration Error (ECE) as the metric for calibration robustness; a higher ECE means less robust to perturbation. Baseline is the calibration attack proposed in \cite{obadinma2024calibration} built on top of FGA \cite{chen2018fast}. As the underconfidence attack is more technically challenging and less effective in its naive form, our experiments are primarily designed to evaluate the effectiveness and robustness of calibration methods under underconfidence attacks.

Each dataset $\mathcal{D}$ is divided into four disjoint subsets, training set $\mathcal{D}_{\textrm{train}}$, validation set $\mathcal{D}_{\textrm{val}}$, calibration set $\mathcal{D}_{\textrm{cal}}$, test set $\mathcal{D}_{\textrm{test}}$, ensuring that $\mathcal{D}=\bigcup\{\mathcal{D}_{\textrm{train}},\mathcal{D}_{\textrm{val}},\mathcal{D}_{\textrm{cal}},\mathcal{D}_{\textrm{test}}\}$ with no overlap. A GNN is first trained and validated  on the $\mathcal{D}_{\textrm{train}},\mathcal{D}_{\textrm{val}}$ sets. After that, the model is calibrated on $\mathcal{D}_{\textrm{cal}}$ and finally tested on $\mathcal{D}_{\textrm{test}}$. In our experiments, we randomly split such that $|\mathcal{D}_{\textrm{train}}|,|\mathcal{D}_{\textrm{val}}|,|\mathcal{D}_{\textrm{cal}}|,|\mathcal{D}_{\textrm{test}}|$ are $0.6,0.15,0.125,0.125$ of $|\mathcal{D}|$, respectively.

\noindent\textbf{Attacker Knowledge.} The strength of adversarial attacks is maximized in the white-box environment and minimized in the black-box environment. To prioritize the exploration of the most extreme vulnerability of graph calibration methods, we perform the adversarial attack in a white-box setting, where the attackers know the model gradient and the model output confidence scores.

\noindent\textbf{Attacker Capability.} To simulate realistic attack scenarios, such as fraudsters manipulating financial transaction networks, we constrain the attacker's capability to only modify edges directly connected to the target node. In this setting, nodes represent entities (e.g., bank accounts), and edges represent interactions (e.g., transactions). This local perturbation assumption aligns with practical constraints, where adversaries typically have limited access and can only influence their immediate neighborhood. Accordingly, our proposed attack framework and experiments are designed to operate under this restricted setting.

\begin{table}[!ht]
\centering
\small
\caption{Comparison of baselines in underconfidence attack.}
\vspace{-1em}
\begin{tabular}{lccc}
\toprule[1.2pt]
\textbf{Method} & \textbf{Runtime (second)} & \textbf{ASR} & \textbf{Confidence Reduction} \\
\midrule
Random & 2.10  & 100/100 & $-0.237$ \\
IGA~\cite{chen2018link}    & 15921  & 0/100   & $0$      \\
FGA~\cite{chen2018fast}    & 4.91   & 76/100  & $-0.2853$ \\
UCGA (ours)  & 9.54   & 100/100 & $-0.3643$ \\
\bottomrule[1.2pt]
\end{tabular}
\label{tab:baseline_comparison}
\end{table}

\begin{table*}[!ht]
    \centering
    \caption{ECE of a calibrated GNN under underconfidence attack variants, using combinations of KL-Divergence Loss (KL), Reranking (RR), Hybrid Loss (HL), and Beam Search (BS). \cmark = component used; \xmark = not used. Intensified cell colors indicate higher ECE values.}
    \vskip -0.1in
    \begin{adjustbox}{width=0.97\textwidth}
    \begin{tabular}{
    >{\centering\arraybackslash}p{1.2em}
    >{\centering\arraybackslash}p{1.2em}
    >{\centering\arraybackslash}p{1.2em}
    >{\centering\arraybackslash}p{1.2em}
    |l
    ccccccccc}
    \toprule[1.2pt]
    \textbf{KL} & \textbf{RR} & \textbf{HL} & \textbf{BS}
    & \textbf{Dataset}
    & \textbf{CaGCN} & \textbf{DCGC} & \textbf{ETS} & \textbf{GATS} & \textbf{GETS} & \textbf{MS} & \textbf{SimCalib} & \textbf{TS} & \textbf{VS} \\
    \midrule
    \xmark & \xmark & \xmark & \xmark
      & \multirow{5}{*}{CiteSeer}
      & \gradcell{0}{0.0412} & \gradcell{2}{0.0604} & \gradcell{0}{0.0472} & \gradcell{0}{0.0563} & \gradcell{0}{0.0432} & \gradcell{0}{0.0571} & \gradcell{0}{0.0470} & \gradcell{1}{0.0648} & \gradcell{0}{0.0530} \\
    \cmark & \xmark & \xmark & \xmark
      &
      & \gradcell{1}{0.0732} & \gradcell{3}{0.0626} & \gradcell{1}{0.0523} & \gradcell{2}{0.0978} & \gradcell{1}{0.0565} & \gradcell{1}{0.0704} & \gradcell{1}{0.0770} & \gradcell{3}{0.0677} & \gradcell{2}{0.0694} \\
    \cmark & \cmark & \xmark & \xmark
      &
      & \gradcell{2}{0.0826} & \gradcell{4}{\textbf{0.0632}} & \gradcell{3}{0.0660} & \gradcell{2}{0.1024} & \gradcell{2}{0.0597} & \gradcell{2}{0.0718} & \gradcell{3}{0.0933} & \gradcell{2}{0.0655} & \gradcell{3}{0.0737} \\
    \cmark & \cmark & \cmark & \xmark
      &
      & \gradcell{3}{0.0849} & \gradcell{0}{0.0558} & \gradcell{2}{0.0571} & \gradcell{3}{0.1071} & \gradcell{3}{0.0620} & \gradcell{3}{0.0758} & \gradcell{2}{0.0822} & \gradcell{0}{0.0613} & \gradcell{1}{0.0657} \\
    \cmark & \cmark & \cmark & \cmark
      &
      & \gradcell{4}{\textbf{0.1130}} & \gradcell{1}{0.0581} & \gradcell{4}{\textbf{0.0786}} & \gradcell{4}{\textbf{0.1164}} & \gradcell{4}{\textbf{0.0712}} & \gradcell{4}{\textbf{0.0858}} & \gradcell{4}{\textbf{0.1125}} & \gradcell{4}{\textbf{0.1000}} & \gradcell{4}{\textbf{0.0816}} \\
    \midrule
    \xmark & \xmark & \xmark & \xmark
      & \multirow{5}{*}{Cora}
      & \gradcell{0}{0.0524} & \gradcell{0}{0.0277} & \gradcell{0}{0.0473} & \gradcell{0}{0.0539} & \gradcell{0}{0.0336} & \gradcell{0}{0.0441} & \gradcell{0}{0.0554} & \gradcell{0}{0.0478} & \gradcell{0}{0.0511} \\
    \cmark & \xmark & \xmark & \xmark
      &
      & \gradcell{1}{0.0743} & \gradcell{1}{0.0282} & \gradcell{1}{0.0708} & \gradcell{1}{0.1037} & \gradcell{1}{0.0465} & \gradcell{1}{0.0473} & \gradcell{1}{0.1018} & \gradcell{2}{0.0726} & \gradcell{1}{0.0627} \\
    \cmark & \cmark & \xmark & \xmark
      &
      & \gradcell{3}{0.0894} & \gradcell{3}{0.0292} & \gradcell{2}{0.0763} & \gradcell{2}{0.1083} & \gradcell{2}{0.0546} & \gradcell{3}{0.0540} & \gradcell{2}{0.1080} & \gradcell{3}{0.0794} & \gradcell{3}{0.0727} \\
    \cmark & \cmark & \cmark & \xmark
      &
      & \gradcell{2}{0.0772} & \gradcell{4}{\textbf{0.0293}} & \gradcell{3}{0.0772} & \gradcell{3}{0.1073} & \gradcell{3}{0.0616} & \gradcell{2}{0.0524} & \gradcell{3}{0.1095} & \gradcell{1}{0.0714} & \gradcell{2}{0.0701} \\
    \cmark & \cmark & \cmark & \cmark
      &
      & \gradcell{4}{\textbf{0.1081}} & \gradcell{2}{0.0284} & \gradcell{4}{\textbf{0.0964}} & \gradcell{4}{\textbf{0.1248}} & \gradcell{4}{\textbf{0.0657}} & \gradcell{4}{\textbf{0.0725}} & \gradcell{4}{\textbf{0.1241}} & \gradcell{4}{\textbf{0.0890}} & \gradcell{4}{\textbf{0.0909}} \\
    \midrule
    \xmark & \xmark & \xmark & \xmark
      & \multirow{5}{*}{CoraML}
      & \gradcell{0}{0.0413} & \gradcell{2}{0.0279} & \gradcell{0}{0.0413} & \gradcell{0}{0.0467} & \gradcell{1}{0.0395} & \gradcell{1}{0.0394} & \gradcell{0}{0.0438} & \gradcell{0}{0.0408} & \gradcell{1}{0.0423} \\
    \cmark & \xmark & \xmark & \xmark
      &
      & \gradcell{1}{0.0601} & \gradcell{0}{0.0227} & \gradcell{1}{0.0450} & \gradcell{1}{0.0735} & \gradcell{2}{0.0435} & \gradcell{3}{0.0420} & \gradcell{1}{0.0572} & \gradcell{1}{0.0441} & \gradcell{0}{0.0390} \\
    \cmark & \cmark & \xmark & \xmark
      &
      & \gradcell{3}{0.0774} & \gradcell{1}{0.0250} & \gradcell{3}{0.0505} & \gradcell{3}{0.0923} & \gradcell{3}{0.0455} & \gradcell{0}{0.0380} & \gradcell{3}{0.0662} & \gradcell{3}{0.0528} & \gradcell{2}{0.0458} \\
    \cmark & \cmark & \cmark & \xmark
      &
      & \gradcell{2}{0.0677} & \gradcell{3}{0.0282} & \gradcell{2}{0.0464} & \gradcell{2}{0.0857} & \gradcell{0}{0.0355} & \gradcell{2}{0.0405} & \gradcell{2}{0.0578} & \gradcell{2}{0.0503} & \gradcell{3}{0.0497} \\
    \cmark & \cmark & \cmark & \cmark
      &
      & \gradcell{4}{\textbf{0.0947}} & \gradcell{4}{\textbf{0.0302}} & \gradcell{4}{\textbf{0.0706}} & \gradcell{4}{\textbf{0.1123}} & \gradcell{4}{\textbf{0.0558}} & \gradcell{4}{\textbf{0.0485}} & \gradcell{4}{\textbf{0.0771}} & \gradcell{4}{\textbf{0.0686}} & \gradcell{4}{\textbf{0.0588}} \\
    \midrule
    \xmark & \xmark & \xmark & \xmark
      & \multirow{5}{*}{Ogbn-arxiv}
      & \gradcell{0}{0.0084} & \gradcell{0}{0.0099} & \gradcell{0}{0.0132} & \gradcell{0}{0.0091} & \gradcell{0}{0.0076} & \gradcell{0}{0.0219} & \gradcell{1}{0.0189} & \gradcell{0}{0.01} & \gradcell{0}{0.0071} \\
    \cmark & \xmark & \xmark & \xmark
      &
      & \gradcell{1}{0.0124} & \gradcell{1}{0.0100} & \gradcell{1}{0.0136} & \gradcell{1}{0.0115} & \gradcell{1}{0.0112} & \gradcell{2}{0.0246} & \gradcell{0}{0.0172} & \gradcell{1}{0.0111} & \gradcell{2}{0.0093} \\
    \cmark & \cmark & \xmark & \xmark
      &
      & \gradcell{3}{0.0152} & \gradcell{2}{0.0100} & \gradcell{2}{0.0146} & \gradcell{2}{0.0115} & \gradcell{2}{0.0129} & \gradcell{1}{0.0244} & \gradcell{3}{0.0219} & \gradcell{3}{0.0133} & \gradcell{3}{0.0108} \\
    \cmark & \cmark & \cmark & \xmark
      &
      & \gradcell{2}{0.0136} & \gradcell{3}{0.0101} & \gradcell{3}{0.0166} & \gradcell{3}{0.0123} & \gradcell{2}{0.0124} & \gradcell{3}{0.0247} & \gradcell{2}{0.0216} & \gradcell{2}{0.0122} & \gradcell{1}{0.0088} \\
    \cmark & \cmark & \cmark & \cmark
      &
      & \gradcell{4}{\textbf{0.0210}} & \gradcell{4}{\textbf{0.0109}} & \gradcell{4}{\textbf{0.0194}} & \gradcell{4}{\textbf{0.0179}} & \gradcell{4}{\textbf{0.0179}} & \gradcell{4}{\textbf{0.0274}} & \gradcell{4}{\textbf{0.0230}} & \gradcell{4}{\textbf{0.0163}} & \gradcell{4}{\textbf{0.0169}} \\
    \midrule
    \xmark & \xmark & \xmark & \xmark
      & \multirow{5}{*}{Photo}
      & \gradcell{0}{0.0247} & \gradcell{1}{0.0085} & \gradcell{0}{0.0294} & \gradcell{0}{0.0296} & \gradcell{0}{0.0205} & \gradcell{0}{0.0326} & \gradcell{0}{0.0374} & \gradcell{0}{0.0247} & \gradcell{0}{0.0238} \\
    \cmark & \xmark & \xmark & \xmark
      &
      & \gradcell{2}{0.0314} & \gradcell{3}{0.0091} & \gradcell{1}{0.0378} & \gradcell{4}{\textbf{0.0425}} & \gradcell{1}{0.0255} & \gradcell{1}{0.0365} & \gradcell{1}{0.0403} & \gradcell{2}{0.0299} & \gradcell{1}{0.0272} \\
    \cmark & \cmark & \xmark & \xmark
      &
      & \gradcell{1}{0.0295} & \gradcell{2}{0.0087} & \gradcell{2}{0.0397} & \gradcell{3}{0.0402} & \gradcell{4}{\textbf{0.0322}} & \gradcell{2}{0.0372} & \gradcell{2}{0.0479} & \gradcell{3}{0.0301} & \gradcell{2}{0.0288} \\
    \cmark & \cmark & \cmark & \xmark
      &
      & \gradcell{3}{0.0343} & \gradcell{0}{0.0074} & \gradcell{3}{0.0408} & \gradcell{1}{0.0365} & \gradcell{3}{0.0299} & \gradcell{3}{0.0376} & \gradcell{3}{0.0507} & \gradcell{4}{\textbf{0.0338}} & \gradcell{3}{0.0300} \\
    \cmark & \cmark & \cmark & \cmark
      &
      & \gradcell{4}{\textbf{0.0448}} & \gradcell{4}{\textbf{0.0098}} & \gradcell{4}{\textbf{0.0470}} & \gradcell{2}{0.0375} & \gradcell{2}{0.0263} & \gradcell{4}{\textbf{0.0693}} & \gradcell{4}{\textbf{0.0573}} & \gradcell{1}{0.0298} & \gradcell{4}{\textbf{0.0301}} \\
    \midrule
    \xmark & \xmark & \xmark & \xmark
      & \multirow{5}{*}{Physics}
      & \gradcell{0}{0.0669} & \gradcell{0}{0.0053} & \gradcell{0}{0.0761} & \gradcell{0}{0.0838} & \gradcell{1}{0.0489} & \gradcell{0}{0.0361} & \gradcell{0}{0.0748} & \gradcell{0}{0.0682} & \gradcell{0}{0.0692} \\
    \cmark & \xmark & \xmark & \xmark
      &
      & \gradcell{1}{0.0733} & \gradcell{1}{0.0079} & \gradcell{1}{0.0756} & \gradcell{2}{0.1063} & \gradcell{0}{0.0447} & \gradcell{1}{0.0429} & \gradcell{2}{0.0996} & \gradcell{1}{0.0740} & \gradcell{1}{0.0722} \\
    \cmark & \cmark & \xmark & \xmark
      &
      & \gradcell{2}{0.0906} & \gradcell{2}{0.0085} & \gradcell{2}{0.1001} & \gradcell{1}{0.1033} & \gradcell{2}{0.0554} & \gradcell{3}{0.0575} & \gradcell{1}{0.0941} & \gradcell{2}{0.0771} & \gradcell{3}{0.0919} \\
    \cmark & \cmark & \cmark & \xmark
      &
      & \gradcell{3}{0.1020} & \gradcell{3}{0.0095} & \gradcell{2}{0.0932} & \gradcell{3}{0.1093} & \gradcell{3}{0.0672} & \gradcell{2}{0.0536} & \gradcell{3}{0.1034} & \gradcell{3}{0.0821} & \gradcell{2}{0.0762} \\
    \cmark & \cmark & \cmark & \cmark
      &
      & \gradcell{4}{\textbf{0.1307}} & \gradcell{4}{\textbf{0.0109}} & \gradcell{4}{\textbf{0.1183}} & \gradcell{4}{\textbf{0.1200}} & \gradcell{4}{\textbf{0.0778}} & \gradcell{4}{\textbf{0.0814}} & \gradcell{4}{\textbf{0.1200}} & \gradcell{4}{\textbf{0.1125}} & \gradcell{4}{\textbf{0.1029}} \\
    \midrule
    \xmark & \xmark & \xmark & \xmark
      & \multirow{5}{*}{PubMed}
      & \gradcell{0}{0.0521} & \gradcell{1}{0.0618} & \gradcell{0}{0.0515} & \gradcell{0}{0.0646} & \gradcell{0}{0.0540} & \gradcell{1}{0.0763} & \gradcell{0}{0.0516} &  \gradcell{1}{0.0743} & \gradcell{2}{0.0809} \\
    \cmark & \xmark & \xmark & \xmark
      &
      & \gradcell{3}{0.0846} & \gradcell{0}{0.0617} & \gradcell{1}{0.0625} & \gradcell{2}{0.0843} & \gradcell{1}{0.0646} & \gradcell{2}{0.0780} & \gradcell{3}{0.0798} &  \gradcell{0}{0.0741} & \gradcell{1}{0.0779} \\
    \cmark & \cmark & \xmark & \xmark
      &
      & \gradcell{2}{0.0797} & \gradcell{2}{0.0631} & \gradcell{2}{0.0663} & \gradcell{1}{0.0806} & \gradcell{3}{0.0801} & \gradcell{0}{0.0762} & \gradcell{2}{0.0776} & \gradcell{2}{0.0759} & \gradcell{3}{0.0843} \\
    \cmark & \cmark & \cmark & \xmark
      &
      & \gradcell{1}{0.0748} & \gradcell{3}{0.0646} & \gradcell{3}{0.0729} & \gradcell{3}{0.0861} & \gradcell{2}{0.0684} & \gradcell{3}{0.0786} & \gradcell{1}{0.0764} & \gradcell{3}{0.0849} & \gradcell{0}{0.0766} \\
    \cmark & \cmark & \cmark & \cmark
      &
      & \gradcell{4}{\textbf{0.1521}} & \gradcell{4}{\textbf{0.0654}} & \gradcell{4}{\textbf{0.1622}} & \gradcell{4}{\textbf{0.1508}} & \gradcell{4}{\textbf{0.1199}} & \gradcell{4}{\textbf{0.1351}} & \gradcell{4}{\textbf{0.1516}} & \gradcell{4}{\textbf{0.1347}} & \gradcell{4}{\textbf{0.1449}} \\
    \midrule
    \xmark & \xmark & \xmark & \xmark
      & \multirow{5}{*}{Reddit}
      & \gradcell{0}{--} & \gradcell{0}{0.0019} & \gradcell{1}{0.0088} & \gradcell{0}{0.0083} & \gradcell{1}{0.0044} & \gradcell{0}{0.0058} & \gradcell{0}{0.0084} & \gradcell{0}{0.0039} & \gradcell{0}{0.0042} \\
    \cmark & \xmark & \xmark & \xmark
      &
      & \gradcell{0}{--} & \gradcell{2}{0.0022} & \gradcell{0}{0.0069} & \gradcell{1}{0.0086} & \gradcell{0}{0.0042} & \gradcell{1}{0.0061} & \gradcell{1}{0.0087} & \gradcell{1}{0.0051} & \gradcell{1}{0.0047} \\
    \cmark & \cmark & \xmark & \xmark
      &
      & \gradcell{0}{--} & \gradcell{2}{0.0023} & \gradcell{3}{0.0097} & \gradcell{4}{\textbf{0.0103}} & \gradcell{2}{0.0054} & \gradcell{2}{0.0091} & \gradcell{3}{0.0114} & \gradcell{2}{0.0051} & \gradcell{3}{0.0056} \\
    \cmark & \cmark & \cmark & \xmark
      &
      & \gradcell{0}{--} & \gradcell{3}{0.0023} & \gradcell{2}{0.0094} & \gradcell{2}{0.0088} & \gradcell{4}{\textbf{0.0060}} & \gradcell{3}{0.0091} & \gradcell{2}{0.0098} & \gradcell{3}{0.0051} & \gradcell{2}{0.0052} \\
    \cmark & \cmark & \cmark & \cmark
      &
      & \gradcell{0}{--} & \gradcell{4}{\textbf{0.0023}} & \gradcell{4}{\textbf{0.0109}} & \gradcell{3}{0.0098} & \gradcell{3}{0.0057} & \gradcell{4}{\textbf{0.0128}} & \gradcell{4}{\textbf{0.0115}} & \gradcell{4}{\textbf{0.0058}} & \gradcell{4}{\textbf{0.0068}} \\
    \bottomrule[1.2pt]
    \end{tabular}
    \end{adjustbox}
    \label{tab:ugca_underconfidence_ece_multi}
\end{table*}

\begin{figure}[!t]
    \centering

    \includegraphics[width=\columnwidth]{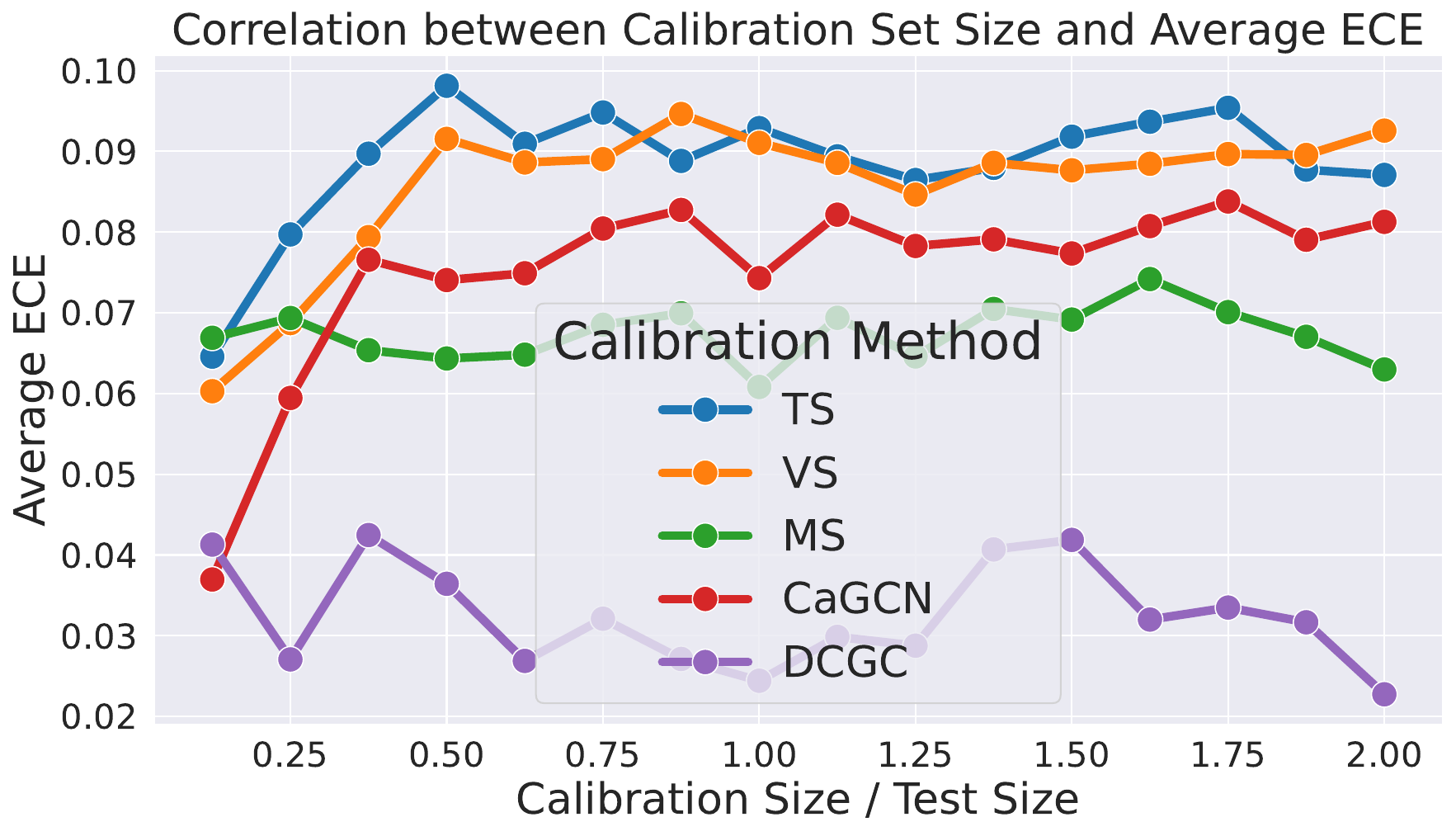}
    \vskip -1em
    \caption{ECE of an attacked GNN calibrated by different calibration methods on the CORA dataset as the calibration set size increases relative to the test set size.}
    \label{fig:calib_size}

    \small
    \vskip 1em
    \captionof{table}{ECE under overconfidence attack. Subscripts indicate ECE of the well-calibrated clean (non-attacked) GNN.}
    \label{tab:ece_oca}
    \vskip -1em
    \resizebox{\columnwidth}{!}{
    \begin{tabular}{lccccc}
    \toprule[1.2pt]
    \textbf{Dataset} & \textbf{TS} & \textbf{VS} & \textbf{MS} & \textbf{CaGCN} & \textbf{DCGC} \\
    \midrule
    Cora     & 0.070{\tiny 0.045} & 0.057{\tiny 0.029} & 0.066{\tiny 0.052} & 0.053{\tiny 0.019} & 0.041{\tiny 0.047} \\
    CoraML   & 0.033{\tiny 0.016} & 0.031{\tiny 0.016} & 0.040{\tiny 0.023} & 0.033{\tiny 0.019} & 0.018{\tiny 0.018} \\
    PubMed   & 0.0617{\tiny 0.055} & 0.075{\tiny 0.042} & 0.042{\tiny 0.010} & 0.088{\tiny 0.130} & 0.045{\tiny 0.055} \\
    Citeseer & 0.041{\tiny 0.047} & 0.064{\tiny 0.040} & 0.031{\tiny 0.032} & 0.061{\tiny 0.034} & 0.039{\tiny 0.054} \\
    \bottomrule[1.2pt]
    \end{tabular}
    }
\end{figure}

\begin{figure*}[!ht]
    \centering

    \begin{subfigure}{0.974\textwidth}
        \centering
        \includegraphics[width=\textwidth]{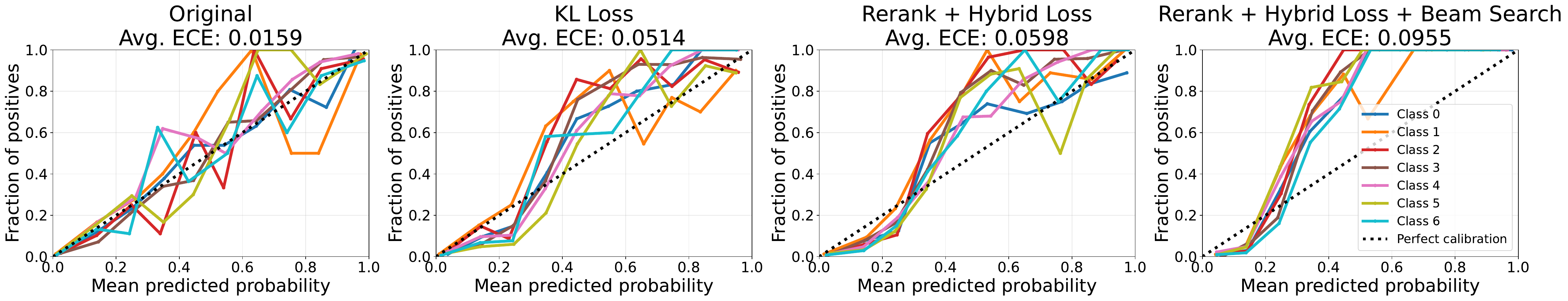}
        \caption{The ECE lines consistently increase, indicating stronger underconfidence, as more components are added to the underconfidence attack. This demonstrates that each component contributes incrementally to degrading calibration performance.}
        \label{fig:methods_compare_plot}
    \end{subfigure}


    \begin{subfigure}{0.975\textwidth}
        \centering
        \includegraphics[width=\textwidth]{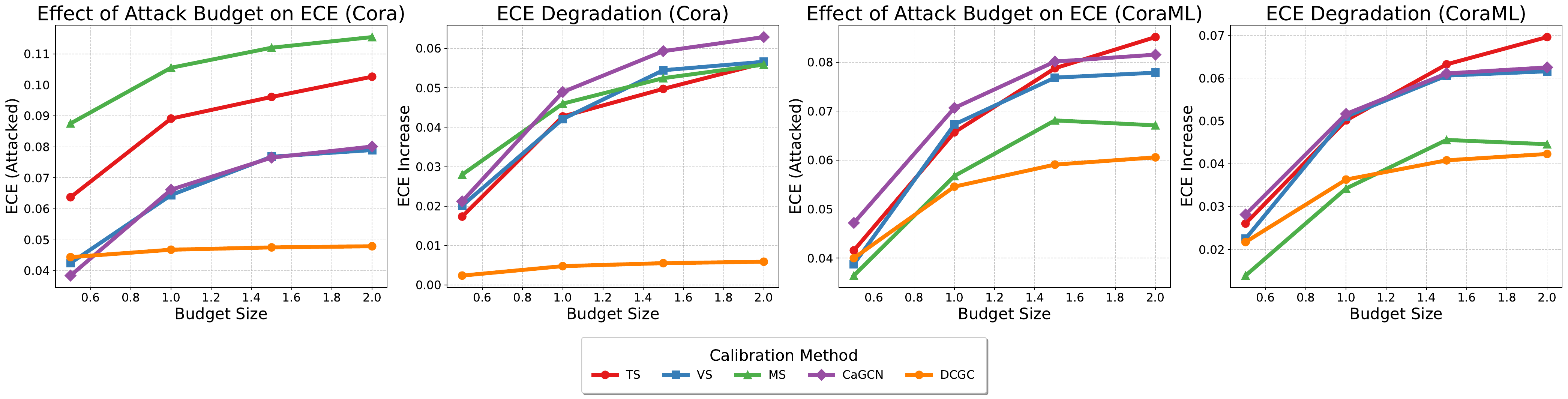}
        \caption{ECE and ECE Increase under adversarial budget scaling for calibration methods on Cora and CoraML. The attack budget is scaled relative to the average node degree of the graph. ECE Increase measures calibration degradation from the clean, unattacked state.}
        \label{fig:budget_chart}
    \end{subfigure}
    \vspace{-0.1in}
    \caption{Calibration robustness analysis under underconfidence attacks and adversarial budget scaling.}
    \label{fig:wrapped_two_figures}
\end{figure*}

\subsection{Baseline Comparison}

We compare our proposed method with a Random baseline that uniformly flips edges connected to the target node. In addition, we adapt two representative gradient-based graph attack methods, IGA \cite{chen2018link} and FGA \cite{chen2018fast}, by directly optimizing the original calibration attack objectives defined in Equations \ref{UCA_Loss} and \ref{OCA_Loss}, while enforcing the label-preservation constraint.

Table \ref{tab:baseline_comparison} shows a clear trade-off between efficiency and effectiveness. The Random baseline is fast and achieves a nominal 100\% ASR, but yields only modest confidence reduction. IGA is prohibitively expensive in runtime and completely fails (0\% ASR). FGA offers a good balance, achieving substantial confidence reduction with moderate runtime, but still fails in a non-trivial fraction of cases. In contrast, UCGA (ours) attains the largest confidence reduction while maintaining 100\% ASR and only a modest increase in runtime compared to FGA, demonstrating that the proposed method is effective without sacrificing practicality. In Appendix~\ref{complexity}, we show that when dealing with large-scale graphs, the efficiency of UGCA is approximately equal to that of the baseline.

\subsection{Ablation of Calibration Attack Components}

To evaluate the individual contributions of each component in our proposed attack framework, we performed an ablation study on the following modules: KL-Divergence Loss (KL), Reranking (RR), Hybrid Loss (HL), and Beam Search (BS). Table \ref{tab:ugca_underconfidence_ece_multi} reveals that integrating multiple attack components systematically enhances the effectiveness of calibration attacks by significantly increasing the ECE. Each component incrementally contributes to miscalibration. Incorporating KL alone yields a moderate improvement in miscalibration, establishing its fundamental role. Subsequently, the inclusion of RR further amplifies this effect (e.g., Cora TS: from 0.0478 to 0.0890) by refining edge perturbation selection. Adding HL generally continues this trend. The addition of BS significantly magnifies the adversarial impact (e.g., PubMed TS: ECE rises sharply from 0.0743 to 0.1347) by extensively broadening the adversarial search space. Figure \ref{fig:methods_compare_plot} further highlights the cumulative effect of components in UCA, illustrating that the accuracy-confidence graphs across all classes are consistently uplifted as more attack components are introduced. Regarding OCA, Table \ref{tab:ece_oca} shows the consistent effectiveness across 4 datasets and 5 calibration methods.

\subsection{Graph-Aware Calibration is Key to Building Trustworthy GNNs Under Attack}

In this section, we examine how increasing the perturbation budget affects calibration robustness. While ECE quantifies the absolute level of miscalibration after an attack, the increase of ECE captures the relative degradation from a model’s calibrated state. Both metrics allow us to better assess both the severity of miscalibration and the robustness of each calibration method under attack. In real-world scenarios, stealthy attacks should ideally introduce only a limited number of edge modifications per graph. To simulate this, we vary the attack budget from 0.5$\times$ to 2.0$\times$ the average node degree of each dataset. Figure \ref{fig:budget_chart} reveals significant variability in robustness among different calibration methods when exposed to increasing adversarial budgets. TS, MS, and VS exhibit substantial degradation in calibration quality, with TS showing the greatest vulnerability. Conversely, methods leveraging graph structure, specifically DCGC, demonstrate stronger resilience and stability, indicated by a plateau in ECE after initial increases. Notably, absolute ECE values alone can be misleading; the relative increase in ECE from the initial point (e.g., 0.5) provides a clearer measure of vulnerability. Thus, graph-aware and data-centric calibration methods, such as DCGC, are recommended to enhance robustness, as they consistently maintain calibration quality under adversarial conditions.

\subsection{Designing Trustworthy GNNs: Lessons from Graph-Aware and Data-Centric Calibration}

To explore how the size of the calibration set influences robustness under attack, we conducted experiments varying the calibration set size from 0.25$\times$ to 2$\times$ the size of the test set. This setting allows us to evaluate the scalability and stability of different calibration methods as more calibration data becomes available.

Figure \ref{fig:calib_size} shows that ECE of models calibrated with classical methods like TS and VS tends to plateau or degrade with larger calibration sets, similar to what is shown in Theorem \ref{generalization_theorem}, in which ECE increases when the model generalizes with more data. In contrast, MS consistently maintains performance across calibration sizes. The graph-aware approach, CaGCN, performs exceptionally well with limited calibration data but shows deterioration with larger datasets. Conversely, the data-centric method DCGC steadily improves calibration accuracy as the calibration set expands, demonstrating superior stability and scalability. These results suggest that graph-aware calibration methods such as CaGCN are preferable in low-data regimes, whereas data-centric methods such as DCGC are more suitable when abundant calibration data are available, offering a more robust and reliable option under adversarial conditions and further highlighting the importance of data-centric robustness~\cite{cuong2024curious}.
\section{Conclusion}

We present the first dedicated framework for calibration attacks on GNNs. Our theoretical analysis and extensive empirical evaluation across multiple benchmarks and calibration strategies demonstrate that GNNs, even when well-calibrated, can be adversarially manipulated to produce highly unreliable confidence scores without changing predicted labels. This highlights a critical insight for the community that improving predictive performance alone is insufficient for trustworthiness. Moving forward, it is essential to develop graph-aware and data-centric calibration methods that remain robust in adversarial environments, ensuring the reliability of GNNs in real-world, safety-critical applications.

\section*{Acknowledgments}

Wang is supported by the Army Research Office (ARO) under grant number W911NF-2110198 and Cisco Faculty Research Award. Cheng is supported by the National Science Foundation (NSF) Grant \#2312862, NSF-Simons SkAI Institute, NSF CAREER \#2440542, NSF \#2533996, National Institutes of Health (NIH) \#R01AG091762, NSF ACCESS Computing Resources, a Google Research Scholar Award, and Cisco gift grant. Dung and Hieu are supported by Grant VUNI.2526.CAIR.03, Center for AI Research, VinUniversity. 

\newpage 
\clearpage

\bibliographystyle{ACM-Reference-Format}
\bibliography{ref}

\appendix
\clearpage
\newpage

\twocolumn[
\begin{center}
    {\LARGE \bf Appendix}
\end{center}
]

\section{Details of the Attack Algorithm}
\label{algo_supp}

\vspace{-0.15in}
\begin{algorithm}
\caption{Naive Graph Calibration Attack}
\begin{algorithmic}[1]
\Require Victim Model $f$, Attack Budget $\Delta$, Input Adjacency Matrix $\mathbf{A}_{0}$, Attacked Node Index $u$, Attack Strategy $\mathcal{T}$.

\Ensure Adversarial Adjacency Matrix $\mathbf{A'}$

\State $\mathbf{A'} \gets \mathbf{A}_0$
\For{$t = 1$ \textbf{to} $\Delta$}
    \State $\mathcal{L} \gets$ \Call{DetermineLoss}{$\mathbf{A'}$}
    \State $v \gets$ \Call{TopEdge}{$\mathcal{L}$, $\mathbf{A'}$} \Comment{Select $B$ potential edges}
    \State $\mathbf{A'}_{uv}\gets\mathbf{A'}_{uv}+\mathbf{M}_{uv}$ \Comment{Update adversarial graph}
\EndFor

\State \textbf{return} $\mathbf{A'}$

\State

\Procedure{DetermineLoss}{$\mathbf{A}$}
\State $\hat{y} \gets \mathop{\arg\max}_{k} f\left(\textbf{A},\mathbf{X}\right)_{u,k}$
\State $\hat{p} \gets \text{Softmax}\left(f\left(\mathbf{A},\mathbf{X}\right)_{u}\right)$
\If{$\mathcal{T}$ \textbf{is} \texttt{OCA}}
    \State \textbf{return} $1-\hat{p}_{\hat{y}}$
\ElsIf{$\mathcal{T}$ \textbf{is} \texttt{UCA}}
    \State $j \gets \mathop{\arg\max}_{k,k\neq \hat{y}} f\left(\textbf{A},\mathbf{X}\right)_{u,k}$
    \State \textbf{return} $\hat{p}_{\hat{y}}-\hat{p}_{j}$
\EndIf
\EndProcedure

\State

\Procedure{TopEdge}{$\mathcal{L}$, $\mathbf{A}$}
\State $\Delta\mathcal{L} \gets \frac{\partial \mathcal{L}}{\partial \mathbf{A}}$
\State $S\gets S_{u}$ \Comment{Just select edges connecting to node $u$}
\State \textbf{return} $\arg\max S$
\EndProcedure

\end{algorithmic}
\label{alg:naive}
\end{algorithm}
\vspace{-0.15in}

In this section, we compare the novelty of our proposed algorithm in~\ref{alg:compare} with the naive framework for adversarial attack for graph calibration in Algorithm~\ref{alg:naive}. The novelty of our proposed algorithm is clear. The prominent difference generally lie in the usage of beam search in Lines 2, 4, 8, 11, colored in green in Algorithm \ref{alg:compare}. The second difference is the function \textsc{DetermineLoss}. It has been added for the case when the prediction is flipped, in Lines 21, 22 in Algorithm \ref{alg:compare}, and the loss function for UCA has been modified, in Lines 26, 27 in Algorithm \ref{alg:compare}. The third difference is in the function \textsc{TopEdge}, where the reranking term is added in line 31 in Algorithm \ref{alg:compare}. 

\section{Complexity \& Runtime Analysis}
\label{complexity}

In this section, we analyze the time/memory complexity for Algorithm~\ref{alg:naive} and Algorithm~\ref{alg:compare}.

\textbf{Notation.} We denote the attack budget (the number of perturbation steps) by $\Delta$, the beam width (top-B states for expanding each step) by $B$, and the size of the candidate edge set around the attacked node $u$ by $d_u$. Let $T_f(A)$ denote the cost of one forward pass to obtain the logits/probabilities of node $u$ under adjacency matrix $A$, including the GNN and softmax computation, and let $T_b(A)$ denote the cost of one backward pass to compute the adjacency gradient $\partial L / \partial A$. Let $|Q|$ denote the size of the priority queue.

\textbf{Algorithm~\ref{alg:naive} (Naive Graph Calibration Attack).} Algorithm~\ref{alg:naive} advances one state per iteration: it computes the loss and gradient a single time, selects one edge, and applies one update. Specifically,
\begin{itemize}
    \item forward/backward once: \textcolor{red}{$T_f + T_b$}
    \item select best edge among $d_u$ candidates: \textcolor{red}{$O(d_u)$} 
\end{itemize}

So: $T_{\text{naive}} = O\!\left(\Delta\cdot\left(T_b+T_f + O(d_u)\right)\right)\approx O\!\left(\Delta\cdot\left(T_b+T_f + d_u\right)\right).$

\textbf{Algorithm 2 (UGCA):} At each iteration ($t = 1\dots\Delta$), the algorithm proceeds as follows:

\begin{itemize}
    \item select \textsc{Top-B} states from beam $Q$: \textcolor{red}{$B$}
    \item For each selected state $A$, perform $\Delta$ perturbation steps:
    \begin{itemize}
        \item Forward/backward once (\textcolor{red}{$T_f + T_b$}), select best edge among $d_u$ candidates (\textcolor{red}{$O(d_u)$}), and push into beam (\textcolor{red}{$\log\!\left(\left|Q\right|\right)$}).
    \end{itemize}
\end{itemize}

So: $ T_{\text{UGCA}} \approx O\!\left(B\cdot\Delta\cdot\left(T_b + T_f + d_u + \log\!\left(\left|Q\right|\right)\right)\right).$

We cap the beam with $B$ and number of perturbatiton $\Delta$ at 5, which bounds the search space $\left|Q\right|$ to 3125. For large-scale graphs such as OGBN-Arxiv and Reddit, the overhead terms introduced by beam search, namely $\Delta,\,B,\, \log\!\left(\left|Q\right|\right)$ are negligible compared to the dominant costs of forward and backward passes $T_b,\,T_f$ and neighborhood exploration $d_u$. As a result, the overall runtime of the naive method and UGCA are effectively the same, i.e., $T_{\text{naive}} \approx T_{\text{UGCA}}$.

\section{Extension to other graph-based tasks}\label{extension_task}
We clarify that our attack framework is not limited to node classification. The key requirement is that the model produces class probability outputs and is differentiable with respect to the adjacency matrix. Therefore, our method can be naturally extended to graph classification tasks.

In this paper, we focus on edge perturbations. We argue this setting is practically motivated. For example, in a banking fraud detection system, an attacker might simulate a set of fake transactions (edges), but manipulating node features such as user age is less feasible. 
To instantiate a feature matrix perturbation attack, we can consider the continuous nature of features, and apply projected gradient descent (PGD) or penalty methods directly. 

For instance, one could enforce a norm-bound constraint $ \left| x^{\mathrm{adv}}_{\text{target}} - x_{\text{target}} \right|_p \leq \delta,$ and perform direct gradient-based optimization on node features.

\section{Missing Proofs in Section~\ref{analyze}}
\label{robustness_proof}

\subsection{Proof of Theorem~\ref{generalization_theorem}}\label{sec:proof_gen}

It is evident that $N$ predictions are categorized into $\mathcal{M}$ bins. We first denote $\mathcal{B}_{m}^{k}$ is a set of predictions of class $k$ falls into bin $m$.
\[
\textrm{Class-wise } \mathrm{ECE} = \frac{1}{K}\sum_{k=1}^{K}\sum_{m=1}^{\mathcal{M}}\frac{\left|\mathcal{B}_{m}^{k}\right|}{N}\left|\textrm{acc}\left(\mathcal{B}_{m}^{k}\right)-\textrm{conf}\left(\mathcal{B}_{m}^{k}\right)\right|,
\]

where for each bin $m$ and class $k$:
\begin{itemize}
    \item $\textrm{acc}\left(\mathcal{B}_m^k\right) = \frac{1}{\left|\mathcal{B}_m^k\right|} \sum_{i \in B_m^k} \mathbf{1}\left(y_i = k\right)$: fraction of true labels equal to class $k$
    \item $\text{conf}\left(\mathcal{B}_m^k\right) = \frac{1}{\left|\mathcal{B}_m^k\right|} \sum_{i \in \mathcal{B}_m^k} p_i^{(k)}$: average predicted probability for class $k$
\end{itemize}

Denote $\mathrm{ECE}_{\textrm{OCA}}$ the ECE of the worst case in the overconfidence attack. In the worst case of an overconfidence attack, the confidence of the attacked (predicted) class is 1, while the remaining classes are 0. Hence, there are only 2 bins, $\mathcal{B}_1^k$ and $\mathcal{B}_\mathcal{M}^k$. Under this setup, we aim to prove that
\[
\blacksquare\quad\textrm{Class-wise } \mathrm{ECE}_{\textrm{OCA}} = \frac{1}{K} \left[(K-2)\textrm{acc}(\mathcal{D})+1\right]
\]
\begin{align}
\mathrm{ECE}_{\textrm{OCA}_{k}}
  &= \frac{\left|\mathcal{B}_{1}^k\right|}{N}
     \left|\textrm{acc}\!\left(\mathcal{B}_{1}^k\right)
          -\textrm{conf}\!\left(\mathcal{B}_{1}^k\right)\right|
     \notag \\
  &\quad + \frac{\left|\mathcal{B}_{\mathcal{M}}^k\right|}{N}
     \left|\textrm{acc}\!\left(\mathcal{B}_{\mathcal{M}}^k\right)
          -\textrm{conf}\!\left(\mathcal{B}_{\mathcal{M}}^k\right)\right|.
\end{align}

The confidence of the first and last bins, $\textrm{conf}\left(\mathcal{B}_{1}^k\right)$ and $\textrm{conf}\left(\mathcal{B}_{\mathcal{M}}^k\right)$ euqals $0$ and $1$. The accuracy of each bin $\mathcal{B}_{m}^k$ lies in $[0,1]$. Hence, 
\begin{align}
\mathrm{ECE}_{\textrm{OCA}_{k}}
  &= \frac{\left|\mathcal{B}_{1}^k\right|}{N}
     \left|\textrm{acc}\!\left(\mathcal{B}_{1}^k\right)-0\right|
     + \frac{\left|\mathcal{B}_{\mathcal{M}}^k\right|}{N}
       \left|\textrm{acc}\!\left(\mathcal{B}_{\mathcal{M}}^k\right)-1\right| \notag \\
  &= \frac{\left|\mathcal{B}_{1}^k\right|}{N}
      \,\textrm{acc}\!\left(\mathcal{B}_{1}^k\right)
     + \frac{\left|\mathcal{B}_{\mathcal{M}}^k\right|}{N}
       \Bigl(1-\textrm{acc}\!\left(\mathcal{B}_{\mathcal{M}}^k\right)\Bigr).
\end{align}

Let $b_{ij}$ denote the number of predictions where the predicted and groundtruth classes are $i$ and $j$. Therefore, the total number of predictions is $N=\sum_{i = 1}^{K}\sum_{j = 1}^{K} b_{ij}$. For a given class $k$, the first bin $\mathcal{B}_1^k$ consists of predictions where $k$ is not the predicted class. This corresponds to $\sum_{i = 1, i\neq k}^{K}\sum_{j = 1}^{K} b_{ij}$ predictions. Conversely, the last bin $\mathcal{B}_{\mathcal{M}}^k$ includes all predictions where $k$ is the predicted labels, including $\sum_{j=1}^{K}b_{kj}$ predictions.

In the first and last bin $\mathcal{B}_1^k$ and $\mathcal{B}_{\mathcal{M}}^k$, there are $\sum_{i=1,i\neq k}^{K}b_{ii}$ and $b_{kk}$ true predictions. Hence $\textrm{acc}\left(\mathcal{B}_1^k\right)=\frac{\sum_{i=1,i\neq k}^{K}b_{ii}}{\sum_{i = 1, i\neq k}^{K}\sum_{j = 1}^{K} b_{ij}}$ and $\textrm{acc}\left(\mathcal{B}_\mathcal{M}^k\right)=\frac{b_{kk}}{\sum_{j=1}^{K}b_{kj}}$. Thus, we have the following:
\begin{align}
\mathrm{ECE}_{\textrm{OCA}_{k}}
  &= \frac{\sum_{\substack{i = 1 \\ i\neq k}}^{K}\sum_{j = 1}^{K} b_{ij}}
          {\sum_{i=1}^{K}\sum_{j = 1}^{K}b_{ij}}
     \,\textrm{acc}\!\left(\mathcal{B}_{1}^k\right) \notag \\
  &\quad + 
     \frac{\sum_{j = 1}^{K} b_{kj}}
          {\sum_{i=1}^{K}\sum_{j = 1}^{K}b_{ij}}
     \left(1-\textrm{acc}\!\left(\mathcal{B}_{\mathcal{M}}^k\right)\right) \\
  &= \frac{\sum_{\substack{i=1 \\ i\neq k}}^{K}\sum_{j=1}^{K} b_{ij}}
           {\sum_{i=1}^{K}\sum_{j = 1}^{K}b_{ij}}
     \cdot 
     \frac{\sum_{\substack{i=1 \\ i\neq k}}^{K} b_{ii}}
          {\sum_{\substack{i=1 \\ i\neq k}}^{K}\sum_{j = 1}^{K} b_{ij}} 
     \notag \\
  &\quad + 
     \frac{\sum_{j=1}^{K} b_{kj}}
          {\sum_{i=1}^{K}\sum_{j = 1}^{K} b_{ij}}
     \left(1-\frac{b_{kk}}{\sum_{j=1}^{K} b_{kj}}\right) \\
  &= \frac{\sum_{\substack{i=1 \\ i\neq k}}^{K} b_{ii}}
          {\sum_{i=1}^{K}\sum_{j = 1}^{K} b_{ij}}
     + \frac{\sum_{j=1}^{K} b_{kj}}
            {\sum_{i=1}^{K}\sum_{j = 1}^{K} b_{ij}}
     - \frac{b_{kk}}
            {\sum_{i=1}^{K}\sum_{j = 1}^{K} b_{ij}} \\
  &= \frac{\sum_{\substack{i=1 \\ i\neq k}}^{K} b_{ii} 
          + \sum_{\substack{j=1 \\ j \neq k}}^{K} b_{kj}}
          {\sum_{i=1}^{K}\sum_{j = 1}^{K} b_{ij}}.
\end{align}

We can also compute the class-wise ECE as follows:
\begin{align*}
    & \textrm{Class-wise } \mathrm{ECE}_\textrm{OCA} \\ 
    = & \frac{1}{K}\sum_{k=1}^{K} \mathrm{ECE}_{\textrm{OCA}_{k}}  \\
    =& \frac{1}{K}\sum_{k=1}^{K} \frac{\sum_{i=1,i\neq k}^{K}b_{ii} + \sum_{j=1, j \neq k}^{K}b_{kj}}{\sum_{i = 1}^{K}\sum_{j = 1}^{K} b_{ij}} \\ 
    = & \frac{1}{K} \frac{\sum_{k=1}^{K}\sum_{i=1,i\neq k}^{K}b_{ii} + \sum_{k=1}^{K}\sum_{j=1, j \neq k}^{K}b_{kj}}{\sum_{i = 1}^{K}\sum_{j = 1}^{K} b_{ij}} \\ 
    = & \frac{1}{K}\frac{\left(K-1\right)\sum_{i = 1}^{K}b_{ii} + \left(\sum_{k=1}^{K}\sum_{j=1}^{K}b_{kj} - \sum_{k=1}^K b_{kk}\right)}{\sum_{i = 1}^{K}\sum_{j = 1}^{K} b_{ij}}\\
    = & \frac{1}{K}\frac{\left(K-1\right)\sum_{i = 1}^{K}b_{ii} + \left(\sum_{k=1}^{K}\sum_{j=1}^{K}b_{kj} - \sum_{i=1}^{K}b_{ii}\right)}{\sum_{i = 1}^{K}\sum_{j = 1}^{K} b_{ij}} \\
    = & \frac{1}{K}\frac{\left(K-2\right)\sum_{i = 1}^{K}b_{ii} + \sum_{i=1}^{K}\sum_{j=1}^{K}b_{ij}}{\sum_{i = 1}^{K}\sum_{j = 1}^{K} b_{ij}}.
\end{align*}

Since $\textrm{acc}\left(\mathcal{D}\right) = \frac{\sum_{i=1}^{K}b_{ii}}{\sum_{i = 1}^{K}\sum_{j = 1}^{K} b_{ij}}$, it follows that

\[
\textrm{Class-wise } \mathrm{ECE}_\textrm{OCA} = \frac{1}{K} [\left(K-2\right)\textrm{acc}\left(\mathcal{D}\right)+1].
\]

In underconfidence attack, the attacker tries to degrade the confidence of the predicted class to $\frac{1}{K}$, also leading the confidence of other classes to $\frac{1}{K}$. In this setting, we want to prove
\[
\blacksquare\quad \textrm{Class-wise }\mathrm{ECE}_{\textrm{UCA}} = \textrm{acc}\left(\mathcal{D}\right)-\frac{1}{K}.
\]

We assume that the confidences of all classes of predictions equal $\frac{1}{K}$ and are in the same bin $\mathcal{B}_{\left\lceil\frac{\mathcal{M}}{K}\right\rceil}$. Denote $\mathrm{ECE}_{\textrm{UCA}_{k}}$ the ECE of class $k$ in the worst case of an underconfidence attack. Hence, $N=\left|\mathcal{B}_{\left\lceil\frac{\mathcal{M}}{K}\right\rceil}\right|$ and $\textrm{acc}\left(\mathcal{B}_{\left\lceil\frac{\mathcal{M}}{K}\right\rceil}\right)=\textrm{acc}\left(\mathcal{D}\right)$. Consequently,
\[
\mathrm{ECE}_{\textrm{UCA}_{k}} = \frac{\left|\mathcal{B}_{\left\lceil\frac{\mathcal{M}}{K}\right\rceil}\right|}{N}\left|\textrm{acc}(\mathcal{B}_{\left\lceil\frac{\mathcal{M}}{K}\right\rceil})-\textrm{conf}(\mathcal{B}_{\left\lceil\frac{\mathcal{M}}{K}\right\rceil})\right|
\]

\[
= \left|{\textrm{acc}\left(\mathcal{D}\right)-\frac{1}{K}}\right| = \textrm{acc}\left(\mathcal{D}\right)-\frac{1}{K}.
\]

\[
\textrm{Class-wise } \mathrm{ECE}_{\textrm{UCA}} = \frac{1}{K}\sum_{k=1}^{K}\mathrm{ECE}_{\textrm{UCA}_k} = \textrm{acc}\left(\mathcal{D}\right)-\frac{1}{K}.
\]

Thus, the proof is finished. 

\subsection{Proof of Corollary~\ref{generalization_corollary}}\label{sec:proof_cor}

In Corollary \ref{generalization_corollary}, we analyze two GNNs with identical classification accuracy but deployed on tasks of differing complexity. Specifically, we assume that the first task involves more classes than the second, i.e., $K_1 > K_2$. Consider the function
$
y = \frac{1}{x} \left[(x - 2)\alpha + 1\right] = \frac{\alpha x - 2\alpha + 1}{x},
$ 
where $\alpha > \frac{1}{2}$. Taking the derivative w.r.t. $x$, we obtain
$$
\frac{dy}{dx} = \frac{2\alpha - 1}{x^2} > 0,
$$
since $2\alpha - 1 > 0$. This implies that $y$ is a monotonically increasing function with respect to $x$, and thus $\mathrm{ECE}_{\textrm{OCA}}$ increases with the number of classes $K$. Additionally, it is straightforward to observe that $\mathrm{ECE}_{\textrm{UCA}}$ also increases with $K$. Therefore, when $K_1 > K_2$, it follows that
$$
\mathrm{ECE}_{\textrm{OCA}_1} > \mathrm{ECE}_{\textrm{OCA}_2} \quad \text{and} \quad \mathrm{ECE}_{\textrm{UCA}_1} > \mathrm{ECE}_{\textrm{UCA}_2}.
$$

Thus, we finish the proof.

\end{document}